%% file: example_paper.tex
\documentclass{article}

\usepackage{microtype}
\usepackage{graphicx}
\usepackage{subcaption}
\usepackage{booktabs}
\usepackage{bm}

\usepackage{hyperref}
\usepackage{enumitem}
\usepackage[utf8]{inputenc} % allow utf-8 input
\usepackage[T1]{fontenc}    % use 8-bit T1 fonts
\usepackage{hyperref}  % hyperlinks
\usepackage{url}  % simple URL typesetting
\usepackage{booktabs}  % professional-quality tables
\usepackage{amsfonts}  % blackboard math symbols
\usepackage{nicefrac}  % compact symbols for 1/2, etc.
\usepackage{microtype} % microtypography
\usepackage{xcolor}    % colors
\usepackage{bm}
\usepackage{colortbl}
\usepackage{booktabs}
\usepackage[dvipsnames]{xcolor}
\usepackage{pifont}
\usepackage{multirow}
\usepackage{enumitem}
\usepackage{wrapfig}
\usepackage{subcaption}
\usepackage{amsthm}
\usepackage{amssymb}
\usepackage{amsmath}
\usepackage[american]{babel}
\usepackage{setspace}
\usepackage{cleveref}
\usepackage{caption}
\usepackage{threeparttable}
\usepackage{comment}
\usepackage{tcolorbox}
\usepackage{setspace}
\usepackage{adjustbox}
\usepackage{bbding}
\usepackage{float}
\usepackage[ruled]{algorithm2e} % Algorithms
\usepackage{wrapfig}
\usepackage[normalem]{ulem}
\newtheorem{theorem}{Theorem}
\newtheorem{theorem*}{Theorem*}

\usepackage[accepted]{icml2026}

\usepackage{amsmath}
\usepackage{amssymb}
\usepackage{mathtools}
\usepackage{amsthm}

\usepackage{booktabs}
\usepackage{multirow}

\theoremstyle{plain}

\usepackage[textsize=tiny]{todonotes}

\icmltitlerunning{Towards Understanding Continual Factual Knowledge Acquisition of Language Models: From Theory to Algorithm}

\begin{document}

\twocolumn[
  \icmltitle{Towards Understanding Continual Factual Knowledge Acquisition of \\ Language Models: From Theory to Algorithm}

  % List of affiliations: The first argument should be a (short) identifier you
  % will use later to specify author affiliations Academic affiliations
  % should list Department, University, City, Region, Country Industry
  % affiliations should list Company, City, Region, Country

  % You can specify symbols, otherwise they are numbered in order. Ideally, you
  % should not use this facility. Affiliations will be numbered in order of
  % appearance and this is the preferred way.

  \begin{icmlauthorlist}
    \icmlauthor{Haoyu Wang}{GSAI}
    \icmlauthor{Yifan Shang}{GSAI}
    \icmlauthor{Zhongxiang Sun}{GSAI}
    \icmlauthor{Weijie Yu}{UIBE}
    \icmlauthor{Xiao Zhang}{GSAI}
    \icmlauthor{Jun Xu}{GSAI}
  \end{icmlauthorlist}

  \icmlaffiliation{GSAI}{Gaoling School of Artificial Intelligence, Renmin University of China, Beijing, China}
  \icmlaffiliation{UIBE}{School of Artificial Intelligence and Data Science, University of International Business and Economics, Beijing, China}

  \icmlcorrespondingauthor{Xiao Zhang}{zhangx89@ruc.edu.cn}
  \icmlkeywords{Machine Learning, ICML}
  \vskip 0.3in
]

% this must go after the closing bracket ] following \twocolumn[ ...

% This command actually creates the footnote in the first column listing the
% affiliations and the copyright notice. The command takes one argument, which
% is text to display at the start of the footnote. The \icmlEqualContribution
% command is standard text for equal contribution. Remove it (just {}) if you
% do not need this facility.

% Use ONE of the following lines. DO NOT remove the command.
% If you have no special notice, KEEP empty braces:
\printAffiliationsAndNotice{}  % no special notice (required even if empty)
% Or, if applicable, use the standard equal contribution text:
% \printAffiliationsAndNotice{\icmlEqualContribution}

\begin{abstract}
Continual Pre-Training (CPT) is essential for enabling Language Models (LMs) to integrate new knowledge without erasing old. 
While classical CPT techniques like data replay have become the standard paradigm, the mechanisms underlying how LMs acquire and retain facts over time, termed as continual Factual Knowledge Acquisition (cFKA), remain unclear. 
In this work, we present a theoretical framework that characterizes the training dynamics of cFKA using a single-layer Transformer, offering a unified explanation for the behavior of representative CPT methods. 
Our analysis reveals that regularization-based methods merely adjust the convergence rate of parameters without altering the inherent forgetting tendency, whereas data replay methods succeed in shifting convergence dynamics and stabilizing pretrained knowledge. 
Building on these insights, we propose a novel generative data replay approach, called \textbf{S}electing \textbf{T}okens via attenti\textbf{O}n \textbf{C}ontribution~(STOC), which identifies influential factual snippets to guide replay data generation. 
Extensive experiments on both synthetic and real-world datasets validate our findings and demonstrate that STOC effectively enhances cFKA by mitigating catastrophic forgetting.

\end{abstract}

\section{Introduction}
Large language models~(LLMs) have acquired substantial factual knowledge during open-domain Pre-Training~(PT)~\citep{lin2016knowledge, wang2023survey}, yet they still require billions of tokens in Continual Pre-Training~(CPT)~\citep{yang2025qwen3, ke2023continual, yildiz2024investigating} to adapt to newly emerging knowledge or domain-specific downstream applications~\citep{tu2025survey, mo2025mid, lai2024large, lee2024survey}.
For instance, CPT may incorporate high-quality legal corpora to improve performance in downstream legal-advice applications~\citep{sun2023short}.
Similar to other Continual Learning~(CL) settings, while new knowledge can be learned, the original tends to be forgotten, referred to as \textit{catastrophic forgetting}~\citep{zhengspurious, zucchet2025language}.
With the increasing integration of LLMs into diverse industries, understanding the learning dynamics of continual Factual Knowledge Acquisition~(cFKA)~\citep{ou2025llms, wang2025learning}, i.e., how LMs memorize newly introduced facts together with the pretrained during CPT, is essential for developing advanced techniques~\citep{wang2025learning}.

Several theoretical analyses have been employed to characterize the training dynamics of LMs~\citep{tian2023scan, nichani2024understanding, ren2024learning}.
However, it still remains unclear how standard continual learning techniques affect cFKA dynamics and mitigate catastrophic forgetting, particularly when factual knowledge is embedded in complex textual expressions.
In fact, every piece of fact can be conveyed through multiple formats, inducing complex interactions among each other during multi-stage training. 
Therefore, analyzing how LMs integrate such heterogeneous expressions into unified representations and memorization is a critical step toward understanding cFKA.
See Appendix~\ref{sec:related_works} for more discussions on related works.

\input{input_figures/framework}

To demystify the cFKA process, we formally characterize how the learnable parameters evolve for a simplified single-layer Transformer, whose modules are assumed to be tuned with different learning rates.
Within this mathematical framework, we prove several key properties: (1) The factual knowledge is decomposed into frequency-based information and stored at the token level.
(2) The attention score assigned to a specific token is governed by the diversity of its associated answer distribution.
Since PT and CPT share the same Next Token Prediction objective, we use a synthetic PT task as a cross-setting validation for our theoretical predictions. 
Specifically, we validate the framework by examining whether its predicted behaviors emerge in PT, thereby providing support for applying the framework to subsequent cFKA analysis.
This validation further shows that the behaviors predicted by the simplified model also arise in modern LMs, strengthening the explanatory power of our analysis.
Figure~\ref{fig:whole_framework} illustrates the overall roadmap and logical structure of this paper.

Building on these results, we provide a new perspective on the mechanisms by which widely used CPT techniques affect cFKA.
On the one hand, we demonstrate that regularization-based methods can alter merely the convergence rate rather than the convergence point for the cFKA dynamics.
Thus, they generally fail in altering the inherent forgetting tendency in the cFKA scenario.
On the other hand, we find that even a small proportion of replay data can substantially mitigate catastrophic forgetting. 
Beyond shifting the convergence point, data replay amplifies the oscillation amplitude near convergence, thereby improving the retention of pretrained knowledge.
We further verify these findings in multi-layer LMs using the synthetic task, and the observation aligns with our prediction.

Furthermore, we propose a new generative data replay method called \textbf{S}electing \textbf{T}okens via attenti\textbf{O}n \textbf{C}ontribution~(STOC).
It is motivated by the need to examine the token-level overlap between new and existing knowledge. 
By calibrating the answer distributions associated with these overlapping tokens, STOC balances the integration of new knowledge with the preservation of pretrained knowledge.
Specifically, STOC selects segments from CPT data based on token-level attention contributions and uses them as prompts for replay generation, eliciting responses from the pretrained LM that contain pretrained factual knowledge.
In two representative CPT scenarios, experiments on both synthetic and real-world datasets indicate that STOC effectively mitigates catastrophic forgetting to improve overall performance.

\section{Learning Dynamics of FKA}
\label{sec:theo_framework}

In this section, we investigate how learnable parameters evolve during training to characterize the memorization behaviors of LMs.
Under assumptions of structured inputs, a single-layer LM with linear attention, and SGD training with different updating rates, we build a tractable framework and theoretically estimate the training dynamics of learnable parameters.
Before applying this framework to continual Factual Knowledge Acquisition~(cFKA) in Section~\ref{sec:cfka_analysis}, we first validate whether the derived dynamics are reflected under a controlled Pre-Training~(PT) setting.
This step avoids a circularity concern: if the theory were validated only in the Continual Pre-Training~(CPT) phase, the validation would rely on the same setting that motivates our later cFKA analysis and STOC design.
Moreover, PT provides a cleaner environment by removing confounding effects from pretrained initialization.
The results show that, our framework explains several empirical phenomena in FKA, including how data augmentation enables LMs to generalize across different textual formats~\citep{allen2023physics}.
These results can be observed not only in the toy single-layer LM but also in modern LMs, providing empirical support for the potential that the theoretical results can be extended to practical algorithms later in Section~\ref{sec:our_method}.

% 主要是一大堆输入数据形式、模型结构的说明
\subsection{Input Structure and Model Architecture}
\label{sec:input_structure}

\input{input_figures/augment}

\paragraph{Input Structure}
We suppose all the factual knowledge can be represented by ``(subject, relation, object)'' triples \citep{paulheim2016knowledge}, which is commonly used in both theoretical~\cite{nichani2024understanding} and empirical~\cite{allen2023physics, zhengspurious} works.
Without loss of generality, every subject/relation/object is considered drawn from certain candidate sets.
Particularly, we assume a causal graph where the random variable object is the ancestor of the subject and relation.
For example, one born in China is more likely to be named ``Chen'' and the relationship between an individual and a company is most likely an employment relationship.
For convenience, we regard each subject can be tokenized into $K$ tokens, each object corresponds to a unique token. For example in Figure~\ref{fig:whole_framework}(left), a person's full name can be divided into first name, middle name, last name token, and his major can be ``Math''.
Besides, we suppose the relation can be conveyed by templates with $L$ tokens, e.g., the relation about major can be ``Studying until midnight every day, [Name]'s major was [Major]''.
Also, we suppose there is always a unique query token $x_{L+1} = q$ before $x_{L+2} = o$ such as ``was'', which acts as the pivot to calculate attention scores for each token.

\paragraph{Model Architecture}
We employ a simplified one-layer transformer as the knowledge learner, such simplification is common in previous works~\citep{tian2023scan, ahn2023transformers, nichani2024transformers, zhang2024trained, nichani2024understanding}.
First, we set $D$ as the vocabulary size and let $x_l\in[D]$ be discrete token, $\bm{x}_l= \bm{e}_{x_l}\in\mathbb{R}^{D}$ be one-hot vector, $\bm X=[\bm{x}_1, \bm{x}_2, \dots, \bm{x}_L]^\top\in\mathbb{R}^{L\times D}$ be the input matrix.
Next, word embedding $\bm E\in\mathbb{R}^{D\times d}$ is applied.
Then,there is a single-head attention module parameterized by $\bm W_Q, \bm W_K, \bm W_V, \bm W_O\in\mathbb{R}^{d\times d}$, outputing
$$
\bm{a} = \sigma\left(\bm X\frac{\bm E \bm W_K \bm W_Q^\top \bm E^\top}{\sqrt{d}} \bm{x}_{L+1} \right),
\bm{h} = \bm W_O\bm W_V^\top \bm E^\top \bm X^\top\bm{a}.
$$
Finally, we assume tied unembedding matrix  $\bm E^\top$, yielding
$$\mathrm{logit}(o \mid \bm X) = \bm{x}_o^\top \bm E\bm{h},~ 
\hat{p}(\cdot| \bm X) = \mathrm{Softmax}(\mathrm{logit}(\cdot \mid \bm X)).$$
Equivalently, the whole model can be re-parameterized as
\begin{align*}
\bm Y &:=\bm E \bm W_O \bm W_V^\top \bm E^\top\in\mathbb{R}^{D\times D},
\\
\bm Z &:= \bm E \bm W_K \bm W_Q^\top \bm E^\top/\sqrt{d}\in\mathbb{R}^{D\times D}.
\end{align*}

\input{Tables/data_augment}

We remark $y_{o, s}=Y_{o, s}$ represents how much token $s$ supports the object $o$ through $\mathrm{logit}(o \mid \bm X)$,
and $a_{s}=\sigma(Z_{s,q})$ determines the non-normalized attention score for token $s$.
In the toy example of Figure~\ref{fig:whole_framework}, token ``John'' contributes to the logit of token ``Math'' by $y_{o, r_1}=-0.8$ with attention weight $z_{r_1}=0.2$.
Intuitively, $\bm Y$ can be understood as simplified FFN modules storing the acquired knowledge in the modern LMs, which maps subject tokens and relation tokens to object tokens~\citep{geva2021transformer}.
$\bm Z$ can be understood as Attention parameters in Transformers~\citep{vaswani2017attention}, playing a critical role in moving information to the query token for prediction~\citep{sunredeep}.

\paragraph{Training Method}
Cross-Entropy loss is adopted: 
\begin{equation*}
\mathcal{L}=-\mathrm{logit}(x_{T+2} \mid \bm X) + \log\sum_{o}\exp\left(\mathrm{logit}(x_o \mid \bm X)\right).
\end{equation*}
As many previous works do~\citep{tian2023scan, li2023transformers, chen2024unveiling}, we assume the learning rate satisfies $\eta_Y\gg\eta_Z$, so that we consider $z$ to be static when analyzing $\bm Y$'s dynamics.
Based on these assumptions, we can derive the approximate evolving dynamics of the parameters, thereby characterizing the model’s factual learning behavior. Detailed proof is in the Appendix~\ref{sec:proof}.

\paragraph{Remark}
We acknowledge that there are many simplifications in the theoretical analysis
for theoretical tractibility and clarity. 
Despite the simplicity, it yields compelling mechanistic explanations for the behaviors of complicated modern LMs.
More discussion about potential extensions such as generalized knowledge format, model architechure and limitations is provided in Appendix~\ref{sec:discussion} and~\ref{sec:limitation}.

% 核心的假设+两个定理
\subsection{Training Dynamics Induced by SGD}\label{sec:theorem}

\paragraph{$\bm{Y}$'s Dynamics}
Assuming $\bm{Z}$ remains unchanged, we first analyze the convergence of $\bm Y$. 
Notice loss function $\mathcal{L}$ is convex with respect to $\bm Y$, we therefore reveal a reference state $\bm U\in\mathbb{R}^{D\times D}$ and corresponding ``token contribution'' 
$$\bm U = \sum_o\sum_s \frac{1}{a_s}\left[\ln(\Pr(s|o)+\frac{1}{L}\ln\Pr(o)\right]\cdot\bm x_o\bm x_s^\top,$$
which can be verified to achieve optimal Bayesian prediction.
We then prove that the distance between $\bm Y$ and $\bm U$ consistently decreases to show the learning dynamics.

\begin{theorem}[Dynamics of $\bm Y$]
\label{theo:Convergence_Y} 
Let $\bm{\xi}(t):= \bm{x}_{L+2}(t)-\mathrm{softmax}(\sum_{s} z_s\delta_s(t)\bm{u}_s)$ represents the perturbation term, and let error term after $t$ step updates be $\bm{e}_s(t):=\bm{y}_s(t)-\bm{u}_s$. 
Then, using 1st order Taylor expansion we have:
\begin{equation}\label{eq:error_Y}
\begin{aligned}
&\bm{e}_s(T)\approx \left[\prod_{t=1}^T\left(\bm{I}-\eta_Yz_s\delta_s(t)\tilde{\bm{H}}(t)\right)\right]\bm{e}_s(0) \\
&+\sum_{t=1}^T\eta_yz_s\delta_s(t)\left[\prod_{\tau=t+1}^T\left(\bm{I}-\eta_Yz_s\delta_s(\tau)\tilde{\bm{H}}(\tau)\right)\right]\bm{\xi}(t),    
\end{aligned}
\end{equation}
where $\delta_s(t)$ is an indicator variable of token $s$, $\tilde{\bm{H}}(t)=\text{diag}(\tilde{\bm{x}}(t)) - \tilde{\bm{x}}(t)\tilde{\bm{x}}(t)^\top$ is the Jacobian matrix, $\tilde{\bm{x}}(t) = \text{softmax}\left(\sum_{s'} z_{s'}\delta_{s'}(t)\bm{u}_{s'}\right)$ is the optimal prediction.
\end{theorem}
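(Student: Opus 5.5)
We track how each column $\bm y_s$ of $\bm Y$ changes under one SGD step, linearize that change around the reference state $\bm U$, and then unroll the resulting affine recursion over $T$ steps. The unrolled recursion is exactly \eqref{eq:error_Y}.

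\textbf{Single-step update.} For a sample $\bm X(t)$ we have $\bm E^\top\bm h$ reparameterized as $\bm Y\bm X^\top\bm a$. With $\delta_s(t)$ indicating that token $s$ appears in the input, the logit vector is $\sum_{s'} z_{s'}\delta_{s'}(t)\bm y_{s'}(t)$. Here $z_s$ denotes the static attention weight, frozen because $\eta_Y\gg\eta_Z$. The gradient of the cross-entropy loss with respect to $\bm y_s$ is
\[
-z_s\delta_s(t)\Big(\bm x_{L+2}(t)-\mathrm{softmax}\big(\textstyle\sum_{s'} z_{s'}\delta_{s'}(t)\bm y_{s'}(t)\big)\Big).
\]
The SGD step is therefore
\[
\bm y_s(t+1)=\bm y_s(t)+\eta_Y z_s\delta_s(t)\big(\bm x_{L+2}(t)-\mathrm{softmax}(\cdot)\big).
\]

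\textbf{Linearization.} We expand the softmax to first order around the reference logits $\sum_{s'} z_{s'}\delta_{s'}(t)\bm u_{s'}$. The softmax Jacobian at that point is $\tilde{\bm H}(t)=\mathrm{diag}(\tilde{\bm x}(t))-\tilde{\bm x}(t)\tilde{\bm x}(t)^\top$, which gives
\[
\mathrm{softmax}(\cdot)\approx\tilde{\bm x}(t)+\tilde{\bm H}(t)\textstyle\sum_{s'} z_{s'}\delta_{s'}(t)\bm e_{s'}(t).
\]
We then adopt the same decoupling the statement implicitly uses: keep only the diagonal term $z_s\delta_s(t)\bm e_s(t)$ and drop the cross-token couplings $s'\neq s$. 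One can justify this by treating the cross terms as part of the higher-order or perturbative noise. Alternatively, it follows from the convention that the factor $z_s$ is absorbed so the recursion reads per token. Subtracting $\bm u_s$ from both sides then yields the affine recursion
\[
\bm e_s(t+1)\approx\big(\bm I-\eta_Y z_s\delta_s(t)\tilde{\bm H}(t)\big)\bm e_s(t)+\eta_Y z_s\delta_s(t)\,\bm\xi(t),
\]
where $\bm\xi(t)=\bm x_{L+2}(t)-\tilde{\bm x}(t)$ is precisely the perturbation term of the statement. Because $\bm U$ is the Bayes-optimal state claimed before the theorem, $\bm\xi(t)$ has zero conditional mean. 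This is what makes it a genuine noise term, although unbiasedness is not needed for the identity itself.

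\textbf{Unrolling.} We iterate the recursion from $t=1$ to $T$, handling the index shift of the $(t+1)$ convention as in the statement. Standard induction on linear time-varying systems produces the homogeneous part $\prod_{t}(\bm I-\eta_Y z_s\delta_s(t)\tilde{\bm H}(t))\,\bm e_s(0)$. It also produces the driven part $\sum_t \eta_Y z_s\delta_s(t)\prod_{\tau>t}(\cdots)\,\bm\xi(t)$, with the matrix products ordered from later to earlier times.

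\textbf{Main obstacle.} The delicate step is the linearization. We must argue that second-order softmax terms and the off-diagonal couplings between $\bm e_s$ and $\bm e_{s'}$ are negligible. I would try to control this by assuming $\bm e(t)$ stays small, using a local-convergence regime near $\bm U$ with small $\eta_Y$. I would then bound the Taylor remainder by $\|\tilde{\bm H}\|\le 1$ times quadratic terms, and state the result as the first-order "$\approx$" that the theorem asserts.
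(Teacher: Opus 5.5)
Your route is the paper's: one SGD step on $\bm y_s$, a first-order expansion of the softmax at the $\bm U$-logits, a per-token decoupling, and unrolling the linear time-varying recursion. You handle the driving term more cleanly than the paper does. The paper's one-step line is $-\eta_Y z_s\delta_s(t)\tilde{\bm H}(t)[\bm e_s(t)+\bm\xi_s(t)]$, which puts $\tilde{\bm H}$ on $\bm\xi$, and its unrolled sum ends with a minus sign. You correctly get $+\eta_Y z_s\delta_s(t)\bm\xi(t)$, as in the statement.

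The gap is the decoupling step, and neither of your justifications for it holds.

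\textbf{The power of $z_s$.} Keeping the diagonal term of your own expansion gives $\eta_Y z_s\delta_s(t)\tilde{\bm H}(t)\,z_s\delta_s(t)\bm e_s(t)$. With $\delta_s(t)^2=\delta_s(t)$, the propagator is therefore $\bm I-\eta_Y z_s^2\delta_s(t)\tilde{\bm H}(t)$, not $\bm I-\eta_Y z_s\delta_s(t)\tilde{\bm H}(t)$. Rescaling $\bm e_s$ by a constant leaves the propagator unchanged, so nothing can be ``absorbed''. The stated form holds only when $z_s=1$, or for $z_s\bm e_s$ after also replacing $\eta_Y$ by $\eta_Y z_s$.

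\textbf{The cross-token terms.} The terms $\tilde{\bm H}(t)\sum_{s'\neq s}z_{s'}\delta_{s'}(t)\bm e_{s'}(t)$ are linear in the errors, i.e.\ of the same order as the term you keep. They are also a deterministic function of the current state, not part of the zero-mean fluctuation $\bm\xi(t)$. Your plan in the last paragraph (keep $\bm e$ small and bound the quadratic Taylor remainder) therefore controls only the genuinely second-order softmax terms. It cannot make these couplings negligible.

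The paper does not close this gap either. Its Taylor line simply writes $\tilde{\bm H}(t)\bm e_s(t)$, silently discarding both the other tokens and the inner factor $z_s\delta_s(t)$. So the ``$\approx$'' in the statement rests on a decoupling ansatz. You should state that ansatz as an explicit assumption, e.g.\ that the errors of co-occurring tokens are negligible relative to $\bm e_s$, as when those tokens have already converged. You should also carry the correct power of $z_s$, rather than present the result as a consequence of first-order Taylor expansion alone.
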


Eq.~(\ref{eq:error_Y}) indicates that throughout the gradient flow, $\bm Y$ gradually approaches $\bm U$ and ultimately oscillates around its vicinity, like a damped oscillator.
The first term of exponential decay determines the training convergence rate by the largest eigenvalue $\lambda_{\max}^+(\tilde{\bm H})$. 
For an information-rich token $s$ like ``John'', $\tilde{\bm{x}}$ will be polarized and $\lambda_{\max}^+(\tilde{\bm H})$ will be small.
Given the same optimization steps, $\bm{y}_{s}$ will exhibit a smaller error and achieve quick convergence, indicating that the LM learns factual knowledge of ``John'' fast. 
The second oscillation term remains at a fixed amplitude of $O(\lambda_{\min}^+(\tilde{\bm H}))$ as $\bm{\xi}(t)$ is bounded, constraining the range of $\bm Y$ and therefore determining the error bar at convergence.
This term, naturally inherited from SGD training, can serve as a reminder to help LM remember training samples, especially those with low frequency.
Also, if ``John'' rarely appears in the training corpus, then a bigger $\lambda_{\min}^+(\tilde{\bm H})$ will lead to better factual memory.
In short, $\bm Y$ partitions knowledge into tokens and organizes them by associating each token with its co-occurring object tokens from a frequency perspective. 
Such token-level knowledge is aggregated through the attention module to provide a guide for object prediction.
\begin{center}
\begin{tcolorbox}[width=1.0\linewidth, boxrule=0pt, top=3pt, bottom=3pt, colback=gray!20, colframe=gray!20]
\textbf{Fact-to-Frequency Abstraction:}
    The factual knowledge is decomposed into frequency-based information and stored at the token level.
    For one certain token, the convergence rate and oscillation amplitude depend on the flatness of its associated tokens.
\end{tcolorbox}
\end{center}

\paragraph{$\bm{z}$'s Dynamics}
We then start to analyze how $\bm Z$ changes in the training process. 
For the ease of qualitative analysis and following previous works~\cite{dai2023can,yao2024enhancing}, we here ignore the normalization term and set $\sigma$ as an identity mapping as a relaxation. 
Intuitively, LMs should learn to assign attention scores according to token importance.
The following theorem indicates that such importance is measured through the $\ell_2$-norm of $\bm{y}_s$.
% Similar results on exponential activation can be obtained in Appendix\why{~\ref{XXX}}.

\begin{theorem} [Dynamics of $\bm{Z}$]
\label{theo:convergence_Z}  
The following quantity remains constant throughout the training: 
\begin{equation}\label{eq:value_Z}
\frac{d}{dt}\left[\left(\frac{1}{\eta_Z}z_s \right)^2 - \sum_o \left(\frac{1}{\eta_Y}y_{o,s} \right)^2\right]=0.
\end{equation}
\end{theorem}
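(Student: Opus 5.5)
The plan is to treat \Cref{theo:convergence_Z} as an instance of the classical ``balancedness'' conservation law for bilinear (two-layer linear) parameterizations. With $\sigma$ the identity, the only place $z_s$ and the column $\bm y_s$ jointly enter the loss is through their product. Concretely, for an input $\bm X$ in which token $s$ occurs $\delta_s$ times,
$$\mathrm{logit}(o\mid\bm X)=\sum_{s}\delta_s\, z_s\, y_{o,s},$$
so each pair $(z_s,\bm y_s)$ is a rank-one bilinear factor. The learning rates are absorbed by working in the rescaled coordinates $\hat z_s:=z_s/\eta_Z$ and $\hat y_{o,s}:=y_{o,s}/\eta_Y$. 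In these coordinates $\mathrm{logit}(o\mid\bm X)=\eta_Z\eta_Y\sum_s\delta_s\hat z_s\hat y_{o,s}$, which is still bilinear in $(\hat z_s,\hat{\bm y}_s)$. I will read the two-timescale training as unit-rate gradient flow on the hatted variables:
$$\dot{\hat z}_s=-\partial\mathcal L/\partial\hat z_s,\qquad \dot{\hat y}_{o,s}=-\partial\mathcal L/\partial\hat y_{o,s}.$$
By the chain rule this is $\dot z_s=-\eta_Z^2\,\partial\mathcal L/\partial z_s$ and $\dot y_{o,s}=-\eta_Y^2\,\partial\mathcal L/\partial y_{o,s}$. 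This is the convention under which the stated squared-rate quantity is the natural invariant. I will state this identification of the continuous-time limit explicitly, since it is exactly what fixes the powers of $\eta$ in \eqref{eq:value_Z}.

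\textbf{Step 1 (per-sample balance identity).} For a single sample with label $x_{L+2}$, let $\bm g:=\hat p(\cdot\mid\bm X)-\bm x_{L+2}$. Then
$$\partial\mathcal L/\partial y_{o,s}=\delta_s z_s g_o,\qquad \partial\mathcal L/\partial z_s=\delta_s\sum_o y_{o,s}g_o.$$
Hence
$$z_s\,\frac{\partial\mathcal L}{\partial z_s}=\delta_s z_s\sum_o y_{o,s}g_o=\sum_o y_{o,s}\,\frac{\partial\mathcal L}{\partial y_{o,s}}.$$
The same identity holds verbatim in the hatted variables, because the bilinear structure is preserved there. It is linear in the loss, so it persists after averaging over the data distribution or summing over a minibatch. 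This is the whole engine of the proof.

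\textbf{Step 2 (differentiate the invariant).} Using the flow, compute
$$\frac{d}{dt}\bigl(\hat z_s^2-\textstyle\sum_o\hat y_{o,s}^2\bigr)=-2\Bigl(\hat z_s\,\partial\mathcal L/\partial\hat z_s-\textstyle\sum_o\hat y_{o,s}\,\partial\mathcal L/\partial\hat y_{o,s}\Bigr).$$
By Step 1 this vanishes. Since $\hat z_s^2-\sum_o\hat y_{o,s}^2=(z_s/\eta_Z)^2-\sum_o(y_{o,s}/\eta_Y)^2$, this is exactly \eqref{eq:value_Z}.

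One bookkeeping point needs care. $z_s$ stands for the single entry $Z_{s,q}$, because the query token $q$ is fixed as the pivot. If other entries $Z_{s,q'}$ receive gradient, they do not interact with $z_s$ in the invariant, so the conservation law holds coordinate-wise for each $s$ with its column $\bm y_s$.

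The main obstacle is not the algebra but justifying the flow convention. Under the naive limit $\dot z=-\eta_Z\,\partial\mathcal L/\partial z$, the conserved quantity would be $z_s^2/\eta_Z-\sum_o y_{o,s}^2/\eta_Y$ instead. I would therefore make the reparameterization by $\hat z,\hat y$ explicit as the modeling choice behind the theorem, and then note that the qualitative conclusion is identical under either convention: $|z_s|$ grows with $\|\bm y_s\|_2$. For discrete SGD the identity holds only up to $O(\eta^2)$ per step, which I would remark on rather than bound.
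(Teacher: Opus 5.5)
Your balance identity $z_s\,\partial\mathcal L/\partial z_s=\sum_o y_{o,s}\,\partial\mathcal L/\partial y_{o,s}$ is exactly the cancellation in the paper's proof. The genuine difference is the flow convention, and there you have found an inconsistency in the paper itself.

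The paper writes $\dot y_{o,s}=\eta_Y z_s\delta_s[\delta(x_{L+2}=o)-\hat p(o\mid\bm X)]$ and $\dot z_s=\eta_Z\delta_s\sum_o[\delta(x_{L+2}=o)-\hat p(o\mid\bm X)]y_{o,s}$. This is the same learning-rate convention as the update in the proof of Theorem~\ref{theo:Convergence_Y}. What the paper then shows to vanish is $\frac{d}{dt}\bigl(z_s^2/\eta_Z-\sum_o y_{o,s}^2/\eta_Y\bigr)$, i.e.\ the invariant you call naive, not the squared-rate quantity in \eqref{eq:value_Z}. Under those dynamics the displayed quantity is in fact not conserved. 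With $G_s:=\delta_s\sum_o[\delta(x_{L+2}=o)-\hat p(o\mid\bm X)]y_{o,s}$, its derivative is $2(\eta_Z^{-1}-\eta_Y^{-1})z_sG_s$, which is generically nonzero under the paper's standing assumption $\eta_Y\gg\eta_Z$.

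So your rescaled flow $\dot z_s=-\eta_Z^2\,\partial\mathcal L/\partial z_s$, $\dot y_{o,s}=-\eta_Y^2\,\partial\mathcal L/\partial y_{o,s}$ is not ``the modeling choice behind the theorem'' as far as the paper's proof goes. It is a different training process, equivalent to SGD with step sizes $\eta_Z^2,\eta_Y^2$. You should label it that way, and state the first-power invariant alongside it as the one that holds for the SGD used everywhere else in the paper.

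What your route buys is a verbatim match with \eqref{eq:value_Z}. It also matches the $\sqrt{\eta_Z/\eta_Y}$ prefactor in \eqref{eq:diversity_index}. Substituting $y_{o,s}=u_{o,s}\propto 1/a_s=1/z_s$ into the squared invariant (dropping the initialization constant) gives $z_s^4\approx(\eta_Z/\eta_Y)^2\sum_o[\cdot]^2$. The first-power invariant would instead give a prefactor $(\eta_Z/\eta_Y)^{1/4}$.

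What the paper's route buys is consistency with the dynamics it actually analyzes. Your remarks that the identity survives minibatch averaging and holds only to $O(\eta^2)$ per discrete step go beyond the paper, which argues for a single sample in continuous time.
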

Theorem~\ref{theo:convergence_Z} reveals the relationship between the values of $z_s$ and $\bm{y}_s$. 
Neglecting the influence of the periodic perturbation and initialization term by setting $\bm{y}_s=\bm{u}_s$ for convenience, Eq.~(\ref{eq:value_Z}) specifies a pattern by which the converged model allocates attention scores through a certain Diversity Index~(DI, \citep{simpson1949measurement}). Let $C$ be a constant determined by initialization, the DI is defined as 
\begin{equation}\label{eq:diversity_index}
\mathrm{DI}(\overline{\bm{x}}_s) \propto -\sqrt{\frac{\eta_Z}{\eta_Y}}\sqrt[4]{\sum_o\left[\ln\Pr(s|o)+L^{-1}\ln\Pr(o)\right]^2}+C.
\end{equation}
For a token $s$ whose associations are more balanced and broadly distributed across objects, such as ``the'', it functions as a common token with low object-specific information content. In our definition, such tokens have a higher Diversity Index (DI), since they are associated with a greater variety of objects, and $\ln \Pr(s|o)$ is typically negative with a large absolute value. Accordingly, the model assigns lower attention scores to these less informative tokens. Therefore, our experimental results demonstrate that tokens with lower information content, namely higher DI, receive lower attention weights, which is fully consistent with the experiments in Section 2.3 and the main theoretical arguments of our paper.
In all, we can describe the attention allocation as:

\begin{center}
\begin{tcolorbox}[width=1.0\linewidth, boxrule=0pt, top=3pt, bottom=3pt, colback=gray!20, colframe=gray!20]
\textbf{Diversity-Aware Attention Assignment:}
    The LM assigns attention scores according to the related-object diversity of each token. 
    Information-poor tokens will receive lower attention scores.
\end{tcolorbox}
\end{center}

\subsection{Validation For the Analysis Framework}\label{sec:data_aug}

Through several bold assumptions, we have derived several properties of the LMs. 
In this section, we empirically validate these results during PT stage. 
Rather than restricting to the single-layer setting, we further examine whether these properties also emerge in modern multi-layer language models. 
As a representative example, we illustrate how data augmentation enables LMs to generalize across different knowledge formats.
More complementary explanation of grokking~\citep{zucchet2025language} is in Appendix~\ref{sec:plateaus}.

\paragraph{Dataset}
To eliminate the overlap between PT's and CPT's knowledge and training samples, we first introduce a fully synthetic task for controlled experiments, which follows the setup in \citep{allen2023physics}.
The constructed \texttt{Biography} dataset contains 120,000 individuals, each characterized by five relations: birthday, birthplace, university, major, and company.
One biography text can be generated by inserting an individual’s name and attributes into pre-defined templates.
It can be readily verified that such an input format aligns with the assumptions established earlier.
In our quest to understand the role of data augmentation, we adopt three different settings for training biography construction: (1) one biography per individual; (2) five biographies per individual; and (3) the number of biographies per individual follows a Poisson distribution of $\lambda=5$.
At the same time, every individual is provided with another three biographies for evaluation, where individual names and templates are used as prompts to generate corresponding attributes.
To examine the LMs' behavior of FKA during every stage, we then divide the dataset into the PT and CPT corpora at a 5:1 ratio with respect to the number of individuals. 
See detailed description, rationale discussion, and example demonstration in the Appendix~\ref{sec:discussion} and \ref{sec:data_construct}.

\input{input_figures/single_layer}

\paragraph{Direct Validation}
We first directly check the relationship between token-level attention scores and the proposed DI in the single-layer setting across two attention variants: linear attention and softmax attention. 
The latter corresponds to the analysis of exponential attention presented in Appendix~\ref{sec:discussion}.
These two models are trained on the settings in Sec~\ref{sec:theorem} where ``Last Token Prediction'' is adopted on the first object token.
The empirical results are presented in Figure~\ref{fig:single_layer}. 
Across both attention mechanisms, we observe a strong negative correlation between token attention scores and diversity index, with both Pearson and Spearman correlation coefficients below $-0.8$. 
This consistent trend indicates that tokens with lower diversity indices tend to receive higher attention weights. 
These findings provide direct empirical support for the prediction of Theorem~\ref{theo:convergence_Z} and suggest that the proposed diversity index captures a meaningful factor governing attention allocation during training.

\input{input_figures/regularization}

\paragraph{Multi-Layer Model Behavior}
To examine whether our findings generalize to multi-layer architectures, we further conduct experiments on \texttt{Pythia-160M} and \texttt{Qwen2.5-0.5B}, which serve as representative modern language models.
Since we calculate cross-entropy on every token, and attention flow between different tokens is complicated, we aggregate attention in different layers by calculating average.
In line with prior works, we employ \textit{hard/soft First Token Accuracy}~(hFTA/sFTA) and \textit{Exact Match}~(EM) as metrics to investigate training dynamics.As we can see in Table~\ref{tab:data_aug}, although all LMs memorize training samples successfully, the LM trained with data augmentation exhibits significantly better performance on the test samples, highlighting the crucial role of data augmentation in improving generalization.
This phenomenon aligns with the observation of knowledge robustness proposed in \citep{allen2023physics}, also observed in real-world experiments across various LLMs and scenarios~\citep{Allenzhu2025-canon}.

\paragraph{Explanation}
The preceding theoretical framework suggested a possible mechanism for this phenomenon.
Uniform or partial data augmentation leads to a more diverse and abundant set of associated answers for each template.
For a certain template token $s$~(also called relation token before), its corresponding $\overline{\bm{x}}_s$ becomes more uniform once data augmentation is adopted.
In this case according to Theorem~\ref{theo:convergence_Z}, it will receive a lower attention score and thus encourage the LM to rely more on subject information for prediction.
As a result, the model tends to generate outputs that are more consistent across templates, realizing the desired generalization ability.
Motivated by the analysis, we investigate the attention matrices of LMs processing the same example data.
As depicted in Fig.~\ref{fig:attention}, LMs trained with data augmentation focus more on the individual names according to the average attention scores for subject or relation tokens.

\begin{center}
\begin{tcolorbox}[width=1.0\linewidth, boxrule=0pt, top=3pt, bottom=3pt, colback=gray!20, colframe=gray!20]
\textbf{Influence of Data Augmentation on Generalization:}
    Data Augmentation enhances generalization between different formats by altering the diversity index of relation tokens and encouraging LMs to predict according to subject information.
\end{tcolorbox}
\end{center}

\section{Analysis on Regularization and Data Replay}
\label{sec:cfka_analysis}

\input{Tables/pythia_biography}
Under the validated theoretical framework, we now analyze two representative CL methods: regularization-based and data replay methods. 
We discuss their underlying mechanisms by examining how these approaches influence the training dynamics, and thus derive their effectiveness in eliminating catastrophic forgetting.  

As a preliminary warm-up, we begin by characterizing LMs' catastrophic forgetting in cFKA in the absence of any continual learning techniques.
According to the training dynamics of $\bm{Y}$, the converged LM will be entirely governed by the CPT corpus, regardless of pretraining initialization. 
While this ensures the model acquires new factual knowledge successfully, the pretrained knowledge undergoes complete forgetting.
Notably, since the parameters have already escaped from small initialization in the PT stage, the optimization of $\bm Y$ proceeds at a faster rate, accelerating picking up new knowledge while forgetting old.
Thus, the pretrained model will learn faster than the LM trained from scratch.
Empirical verification is in Figure~\ref{fig:learning_dynamics_cpt} and Table~\ref{tab:regularization}, which indicates the LM learns faster but suffers from catastrophic forgetting.
\begin{center}
\begin{tcolorbox}[width=1.0\linewidth, boxrule=0pt, top=3pt, bottom=3pt, colback=gray!20, colframe=gray!20]
\textbf{Rapid Acquisition while Total Forgetting:}
    Without any C techniques, the LM will preserve the attention patterns acquired in the pre-training phase. 
    It will learn new knowledge at a faster pace, with the cost of a sudden drop in previously acquired knowledge.
\end{tcolorbox}
\end{center}

% Regularization无效，不改变收敛点
\subsection{Regularization Fails in Retaining Old Knowledge}
\label{sec:regularization}

As one of the most common approaches in continual learning, regularization-based methods work by adding additional constraints to the loss function, so as to prevent LMs' predictions from changing drastically.
Although recent studies have shown that the implicit bias of SGD drives the model to converge to the optimal solution closest to the initialization in terms of $\ell_2$ distance~\citep{soudry2018implicit, ji2019implicit}, regularization-based methods advocate modifying the loss function to weight parameters according to their importance~\citep{zenke2017continual, aljundi2018memory}, as the loss objective with regularization becomes
\begin{equation}
    \mathcal{L} = \mathcal{L}_{\mathrm{new}}+\frac{k}{2}\sum_{i}\omega_i(\theta_i-\theta_i^*)^2,
\label{eq:regularization_term}
\end{equation}
where $k$ is the regularization coefficient, $\bm{\theta}^*$ is the reference parameter before continual training, and $\bm{w}$ is the importance measure.
A typical example is Elastic Weight Consolidation (EWC)~\citep{kirkpatrick2017overcoming}, which measures parameter importance through the Fisher Information Matrix.
Although regularization has been highly popular for years, its effectiveness has proven to be limited in practical LLM CPT scenarios~\citep{yang2023fingpt, shi2024continual}.

We now provide a theoretical explanation, where the changes induced by the regularization objective are marked in {\color{blue}blue}. 
As the additional term in the form of Eq.~(\ref{eq:regularization_term}) is introduced, the updating gradient of $\bm Y$ becomes
\begin{equation*}
\begin{aligned}
\dot{y}_{o,s}&=\eta_Yz_s\delta_s(t)\left[\delta(x_{T+2}=o) -\hat{p}(o \mid \bm X)\right] \\
&~~\quad {\color{blue}-k\eta_Yw_{o, s}(y_{o,s}-y^{\mathrm{old}}_{o,s}).}
\end{aligned}
\end{equation*}
Then similar to Theorem~\ref{theo:Convergence_Y}, the dynamics of $\bm{e}_s(t)$ become
\begin{equation}\label{eq:error_Y_regular}
\begin{aligned}
&\bm{e}_s(T)\approx \left[\prod_{t=1}^T\left({\color{blue}\bm{J}}-\eta_Yz_s\delta_s(t)\tilde{\bm{H}}(t)\right)\right]\bm{e}_s(0) \\
&+\sum_{t=1}^T\eta_yz_s\delta_s(t)\left[\prod_{\tau=t+1}^T\left({\color{blue}\bm{J}}-\eta_Yz_s\delta_s(\tau)\tilde{\bm{H}}(\tau)\right)\right]\bm{\xi}(t)\\
&{\color{blue}-\sum_{t=1}^T k\eta_Y\left[\prod_{\tau=t+1}^T\left(\bm{J}-\eta_Yz_s\delta_s(\tau)\tilde{\bm{H}}(\tau)\right)\right]\tilde{\bm u},}
\end{aligned}
\end{equation}
where $\bm J = \bm I - k\eta_Y\mathrm{diag}(\bm{w}_s)$ and $\tilde{\bm u} = \mathrm{diag}(\bm{w}_s)(\bm{u}_s^{\mathrm{old}}-\bm{u}_s^{\mathrm{new}})$ are the impacts of the regularization term.
We implicitly abuse the notation to set the starting point of CPT at time $t=0$.
% \begin{equation}\label{eq:error_Y_regular}
% \begin{aligned}
%     \bm{e}_s(t)&\approx(I{\color{blue}-k\eta_Y\mathrm{diag}(\bm{w}_s)}-\eta_Yz_s\bm H_s)^t\bm{e}_s(0)\\
%     &+\eta_Yz_s\sum_{s=1}^t(I{\color{blue}-k\eta_Y\mathrm{diag}(\bm{w}_s)}-\eta_Yz_s\bm H_s)^{t-s-1}\bm{\xi}_s(t)\\
%     &+{\color{blue}k\eta_Y\sum_{s=1}^t\left[(I{\color{blue}-k\eta_Y\mathrm{diag}(\bm{w}_s)}-\eta_Yz_s\bm H_s)^{t-s-1}\right.}\\
%     &~~~{\left. ~~ \color{blue}\cdot\mathrm{diag}(\bm{w}_s)(\bm{u}_s^{\mathrm{old}}-\bm{u}_s^{\mathrm{new}})\right].}
% \end{aligned}
% \end{equation}
Here we ignore the oscillation term at the end of pre-training without compromising the reliability of the conclusion, and $t$ is defined to the beginning of CPT, and so does $\tilde{\bm H}$.
The first two terms in Eq.~(\ref{eq:error_Y_regular}) determine the convergence rate by changing the eigenvalues 
$\lambda^+_{\max}({\color{blue}k\text{diag}(\bm{w_s})}+z_s\tilde{\bm H})$ and $\lambda^+_{\min}({\color{blue}k\text{diag}(\bm{w_s})}+z_s\tilde{\bm H})$.
As the amplitude of oscillation increases, the model’s convergence speed becomes limited.
Moreover, the third term determines the LMs' retention of both PT and CPT knowledge after convergence, and adjusting the value of $k$ enables a trade-off between the two.
However, this term is constrained by the smallest positive eigenvalue $\lambda^+_{\min}(\text{diag}(\bm{w}_s)) = \min_o w_{o,s}$, which indicates that the old knowledge regarding token $s$ can be retained only if each component of $\bm{y}_s$ is of substantial importance.
Considering the relationship between parameter count and knowledge amount~\citep{allen2024physics}, such a situation will not occur in the FKA scenario.
Therefore in the cFKA setting, regularization methods do not alter the convergence point but only affect convergence rate.

Continuing with the experimental setup introduced in Section~\ref{sec:data_aug}, we apply the regularization method during CPT.
As demonstrated in Figure~\ref{fig:learning_dynamics_cpt}, 
EWC does alter the forgetting rate, with a larger $\alpha$ resulting in a more significant deceleration. 
In the Table~\ref{tab:regularization}, although forgetting is slowed down in the early stage of CPT, the final state of the old knowledge remains unchanged, and an excessively large regularization term may even further impair the model’s knowledge.

\begin{center}
\begin{tcolorbox}[width=1.0\linewidth, boxrule=0pt, top=3pt, bottom=3pt, colback=gray!20, colframe=gray!20]
\textbf{Slower but Still Forgetting:}
    Introducing a regularization term enables the model to forget slowly, but it ultimately fails to mitigate forgetting.
\end{tcolorbox}
\end{center}

\input{Tables/knowledge_edit}

% Data Replay有用，改变收敛点的预训练知识存储强度
\subsection{Data Replay Stabilize Pretrained Knowledge}\label{sec:data_replay}
Besides adding regularization terms, another commonly used approach is data replay, which addresses catastrophic forgetting by storing or generating part of PT data and mixing it together with CPT data.
Some literature has discussed the significance of data replay techniques and has further summarized scaling laws to guide data mixing~\citep{wang2025learning}.
An interesting observation is that even when replay data accounts for only a small fraction, it still alleviates the forgetting~\citep{gu2024cmr} effectively.

The approximation proposed in the previous section provides a solid explanation for the effectiveness of data replay.
Let $\alpha$ denote the percentage of CPT data, the frequency-based prediction changes 
$$\Pr(\bm{x}_s) = {\color{blue}\frac{1-\alpha}{|\mathcal{O}_s^{\mathrm{old}}|}\sum_{o\in\mathcal{O}_s^{\mathrm{old}}} \bm{x}_o}+\frac{{\color{blue}\alpha}}{|\mathcal{O}_s^{\mathrm{new}}|}\sum_{o\in\mathcal{O}_s^{\mathrm{new}}} \bm{x}_o.$$
In particular, the first term in the formula ensures that the pretrained knowledge is retained in the model parameters, with the strength of this knowledge being modulated by the parameter $1-\alpha$.
At the same time, we remark that the amplitude of the oscillation term will become larger, controlled by $\lambda^+_{\min}(\tilde{\bm H})$. 
Under the combined effects of these two components, data replay can substantially mitigate the forgetting when $\alpha$ is large.

Following the setup in the previous section, we construct CPT corpora with varying proportions of replay data to examine how LMs learn new knowledge while forgetting old.
We consider two replay data selection rules: (1) for each individual, one biography is retained as replay data; (2) half of the individuals retain two biographies as replay data, while the other half have no replay data.
From the results in Table~\ref{tab:data_replay_pythia}, we observe that data replay plays a crucial role in alleviating catastrophic forgetting. Even when the proportion of replay data is as small as 10\%, the model is still able to retain a substantial amount of pre-training knowledge.
On the other hand, although the two replay strategies yield a comparable number of replay tokens and the overall replay ratio remains identical, the first strategy leads to noticeably less forgetting, implying that replay datasets should encompass a wide range of factual knowledge.

For reasons like property protection, storing past data is not always feasible. 
Generative data replay, therefore, aims to train a generative model to produce pseudo-samples as a substitute for the original past data.
Specifically, when the target model itself is a generative model, many studies attempt to directly decode replay data from the target model~\citep{sunlamol}.
For Example, LAMOL~\citep{sunlamol} uses special tokens as prompts to have the language model generate replay data.
The mechanism of these methods is similar to that of stored data replay, but it places greater demands on the pretrained model to generated original knowledge.
Empirical validation can be found in Tabel~\ref{tab:data_replay_pythia} and \ref{tab:data_aug_qwen}.

\begin{center}
\begin{tcolorbox}[width=1.0\linewidth, boxrule=0pt, top=3pt, bottom=3pt, colback=gray!20, colframe=gray!20]
\textbf{Data Replay Alters Convergence and Amplifies Confidence:}
    Data replay shifts the convergence point to retain old knowledge. It also amplifies the oscillations to strengthen confidence of the old.
\end{tcolorbox}
\end{center}

\section{Dynamics-Inspired Generative Data Replay}\label{sec:our_method}

From the above analysis, we validate the crucial effectiveness of data replay in alleviating catastrophic forgetting.
However, considering that storage-based replay is largely infeasible in the context of LLMs, where proprietary PT corpora are seldom publicly available, generative data replay still holds vast application potential.
% Nevertheless, under synthetic experimental settings, we still include storage-based data replay as an upper bound for performance comparison.
In this section, we further propose a dynamics-inspired generative data replay method on the cFKA scenario, which can be naturally derived from our above analyses.
We then conduct experiments to evaluate the effectiveness of the proposed method in both synthesis and real-world datasets.

% 用CPT数据中Attn较高的字串作为prompt生成回放数据
\subsection{STOC: \textbf{S}electing \textbf{T}okens via attenti\textbf{O}n \textbf{C}ontribution}\label{sec:stoc_method}

\paragraph{Target}
Although data replay is highly effective in mitigating catastrophic forgetting, these methods are still far from reaching their full potential.
In particular, existing data replay approaches do not exploit the architectural characteristics of autoregressive Transformer models, leaving room for improvement in terms of the knowledge quality and diversity of the generated samples~\citep{zhengspurious, huang2024mitigating}.
Building on our transformer-specific analysis, our method aims to automatically build prompts that enable LMs to generate responses with pretrained knowledge.

\paragraph{Motivation}
As discussed in Sec.~\ref {sec:data_aug}, LMs tend to perform diversity-aware attention assignment.
If a token narrows down the range of correct answers, i.e. the facts associated with the token are more specific, then it receives a higher attention score. 
The converse also holds.
Thus, the attention score assigned to a token can be used to estimate how much knowledge it carries, allowing to select certain snippets to prompt the pre-trained LMs to generate replay data.

\paragraph{Method}
Based on these insights, we propose a generative data replay method termed as \textbf{S}electing \textbf{T}okens via attenti\textbf{O}n \textbf{C}ontribution~(STOC).
First, for a given piece of CPT example, STOC performs a forward pass to obtain the attention scores of each token, which are then aggregated by averaging across different layers and attention heads.
Then, these attention scores are used to select fixed-length snippets from the training example, which can be implemented using a sliding window.
Subsequently, the selected tokens serve as prompts to guide the pretrained LMs in producing replay data.
Finally, a data selection process can be optionally performed to filter out low-quality data, where MinHash-based deduplication is primarily employed to filter out redundant data.
Further details are in Appendix~\ref{sec:exp_detail}.

\subsection{Experiments and Analysis}\label{sec:stoc_experiment}

To evaluate the effectiveness of STOC comprehensively, in addition to benchmarking against existing methods on synthetic biography learning, we extend our experiments to two primary continual pre-training scenarios, aligning the task with its real-world objectives. 
To address new knowledge acquisition, we adapt existing knowledge-editing benchmarks for training; for new domain adaptation, we utilize datasets from the legal domain.
Aligned with previous sections, we select Naive(with no replay data) and LAMOL as baselines. 
We also conduct an ablation study on \texttt{Biography} dataset by introducing randomly selected snippets as prompts.

We first implement STOC on the \texttt{Biography} dataset, where the setup follows the experiments before. Further details can be found in Appendix~\ref{sec:exp_detail}.
When the LMs' sFTA on CPT knowledge exceeds 90\%, we report the results in Table~\ref{tab:data_replay_pythia} and \ref{tab:data_aug_qwen}.
As we can see, STOC successfully mitigates catastrophic forgetting and outperforms LAMOL, indicating that STOC can generate higher-quality replay data.
Meanwhile, STOC also receives better results than randomly selected snippets, which serves as an ablation study to support attention-based selection.
Also, STOC can be integrated with other independent CL techniques such as Freezing~\citep{zhengspurious} to further enhance performance, demonstrating its wide applicability.
We list several generated replay data as case studies in Appendix~\ref{sec:exp_detail}.

To further illustrate the practical utility of STOC, we use \texttt{KnowEdit}, a model-editing benchmark, to fine-tune the pretrained base model of Pythia-160M and Qwen2.5-0.5B.
Specifically, we select ZSRE, Wiki\_Bio, and Wiki\_Recent within KnowEdit to conduct our experiments.
Detailed description to the setup can be found in Appendix~\ref{sec:exp_detail}.
To maintain consistency with the previous sections, we employ soft average token accuracy on both original and continual knowledge.
Such metrics are also conceptualized as \textit{Effectiveness} and \textit{Locality} in the context of knowledge editing.

The results are positioned in Table~\ref{tab:real_dataset}. 
It can be confirmed that STOC is not only capable of eliminating forgetting, but also ensuring cFKA well when adopted on practical pretrained LMs.
According to the generated examples in Appendix~\ref{sec:exp_detail}, STOC is able to 
induce high-quality replay data that contains relevant knowledge.

To further evaluate the effectiveness and scalability of our approach, we conduct experiments on larger-scale and more heterogeneous real-world corpora. We first select \texttt{law\_judiciary} subset of \texttt{IndustryCorpus2} as our CPT source, allowing us to scale domain-specific data to several billion tokens. 
We additionally use \texttt{MMLU}, \texttt{MMLU-Redux-2.0}, and \texttt{SuperGPQA} as the evaluation benchmark, since answering these questions requires complicated, highly non-structured knowledge rather than simple factual triples.
Concretely, we sample 1B training tokens, and use the law subsets (denoted as ``continual'' to maintain consistency) to assess how much new legal knowledge the model acquires during CPT. 
The other subsets (denoted as ``original'' likely) serve to measure knowledge retention, as the pretrained model has already learnt general knowledge. 
% For convinience to obtain pre-trained models of suitable scale, we selecte the Qwen3 Family in the experiment.
More details can be found in Appendix~\ref{sec:exp_detail}.

\input{Tables/industry_corpus}

As shown in Table~\ref{tab:large_dataset}, STOC not only outperforms the baseline methods in mitigating catastrophic forgetting consistently, but also shows clear gains on the continual subsets.
A plausible explanation is that replay stabilizes the model’s internal representations by maintaining exposure to previously learned distributions.
When scaling the continual pretraining data to larger corpora, STOC’s improvements remain stable rather than diminishing, indicating its robustness.

\begin{center}
\begin{tcolorbox}[width=1.0\linewidth, boxrule=0pt, top=3pt, bottom=3pt, colback=gray!20, colframe=gray!20]
\textbf{STOC:}
    Selecting Tokens via attentiOn Contribution   helps eliminate catastrophic forgetting in both synthesis and real-world scenario, reinforcing the deductibility of our theoretical analysis.
\end{tcolorbox}
\end{center}

\section{Conclusion}
This paper aims to explain LMs' behavior in continual FKA, where new knowledge is learned while the pretrained is prone to being forgotten.
To facilitate the analysis, we first design a simplified single-layer Transformer and investigate its training dynamics.
Then, we shed light on the mechanism of popular CL methods such as regularization and data replay, drawing conclusions that align with existing observations.
Finally, inspired by the analyses, we propose STOC, a generative data replay method.
Extensive experiments demonstrate its effectiveness in eliminating catastrophic forgetting.
Further discussion on extensions, limitations, and future work is provided in Appendix~\ref{sec:discussion}.
Code is available at \url{https://github.com/WhyDwelledOnAi/continual_Factual_Knowledge_Acquision}.

% \section*{Accessibility}
% Authors are kindly asked to make their submissions as accessible as possible
% for everyone including people with disabilities and sensory or neurological
% differences. Tips of how to achieve this and what to pay attention to will be
% provided on the conference website \url{http://icml.cc/}.

% \section*{Software and Data}
% If a paper is accepted, we strongly encourage the publication of software and
% data with the camera-ready version of the paper whenever appropriate. This can
% be done by including a URL in the camera-ready copy. However, \textbf{do not}
% include URLs that reveal your institution or identity in your submission for
% review. Instead, provide an anonymous URL or upload the material as
% ``Supplementary Material'' into the OpenReview reviewing system. Note that
% reviewers are not required to look at this material when writing their review.

% Acknowledgements should only appear in the accepted version.
\section*{Acknowledgements}
This work was partially supported by the National Natural Science Foundation of China (No. 62376275, 62472426, 62502091). Work partially done at Beijing Key Laboratory of Research on Large Models and Intelligent Governance, and Engineering Research Center of Next-Generation Intelligent Search and Recommendation, MOE. Supported by fund for building world-class universities (disciplines) of Renmin University of China.

\section*{Impact Statement}
This paper presents work to advance the field of deep learning and continual learning, specifically by understanding and improving the continual knowledge learning of language models. 
The proposed method aims to reduce reasoning errors and improve the alignment of model outputs with logical correctness.
There are many potential societal consequences of our work, none of which we feel must be specifically highlighted here beyond the general implications of advancing the capabilities of generative AI.

\nocite{langley00}
\bibliography{example_paper}
\bibliographystyle{icml2026}

%%%%%%%%%%%%%%%%%%%%%%%%%%%%%%%%%%%%%%%%%%%%%%%%%%%%%%%%%%%%%%%%%%%%%%%%%%%%%%%
%%%%%%%%%%%%%%%%%%%%%%%%%%%%%%%%%%%%%%%%%%%%%%%%%%%%%%%%%%%%%%%%%%%%%%%%%%%%%%%
% APPENDIX
%%%%%%%%%%%%%%%%%%%%%%%%%%%%%%%%%%%%%%%%%%%%%%%%%%%%%%%%%%%%%%%%%%%%%%%%%%%%%%%
%%%%%%%%%%%%%%%%%%%%%%%%%%%%%%%%%%%%%%%%%%%%%%%%%%%%%%%%%%%%%%%%%%%%%%%%%%%%%%%
\newpage
\appendix
\onecolumn

\input{appendix}

\end{document}

%% file: input_figures/framework.tex
\begin{figure*}[t]
\centering
\includegraphics[width=0.95\linewidth]{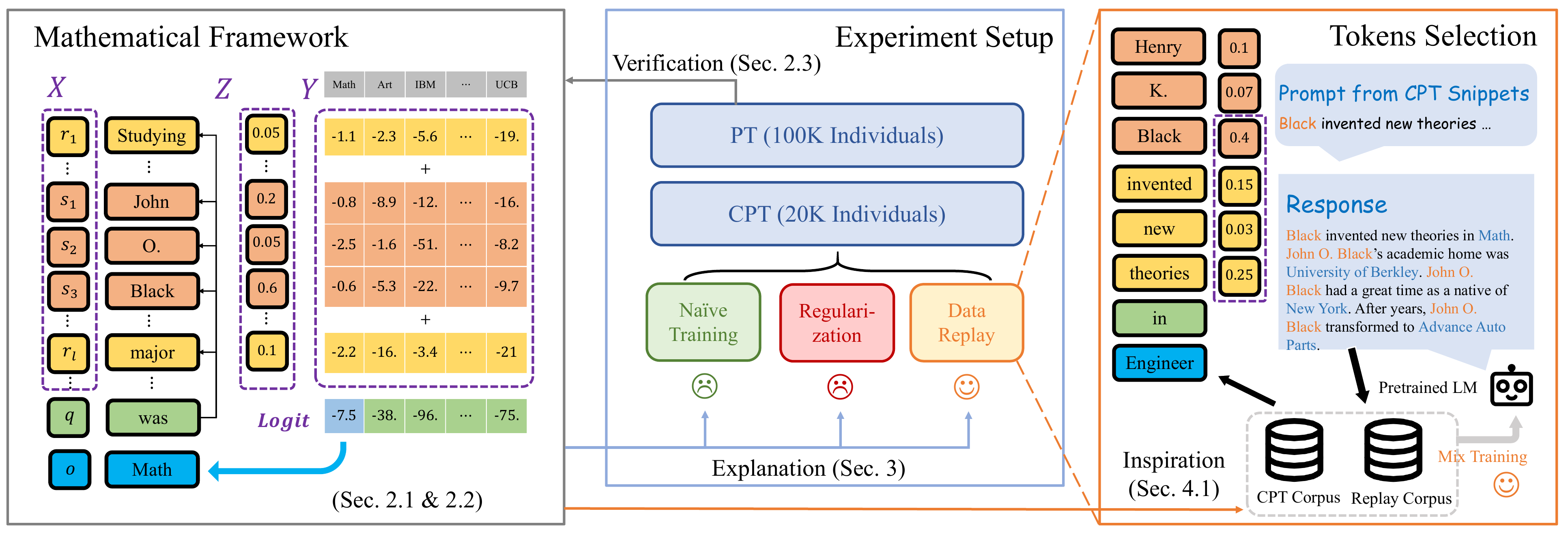}
\caption{The overall structure of the paper. In Section~\ref{sec:theo_framework}, we present our mathematical framework and main theoretical results, whose effectiveness is verified during controlled PT.
After that, we analyze popular continual learning methods in Section~\ref{sec:cfka_analysis}.
Inspired by these findings, we propose a new generative data replay method in Section~\ref{sec:our_method}.
Overall Notations are provided in Table~\ref{tab:notations}.
}
\label{fig:whole_framework}
\end{figure*}

%% file: input_figures/augment.tex
\begin{figure*}[t]
  \centering
  \begin{minipage}[b]{0.24\textwidth} 
    \includegraphics[width=\textwidth]{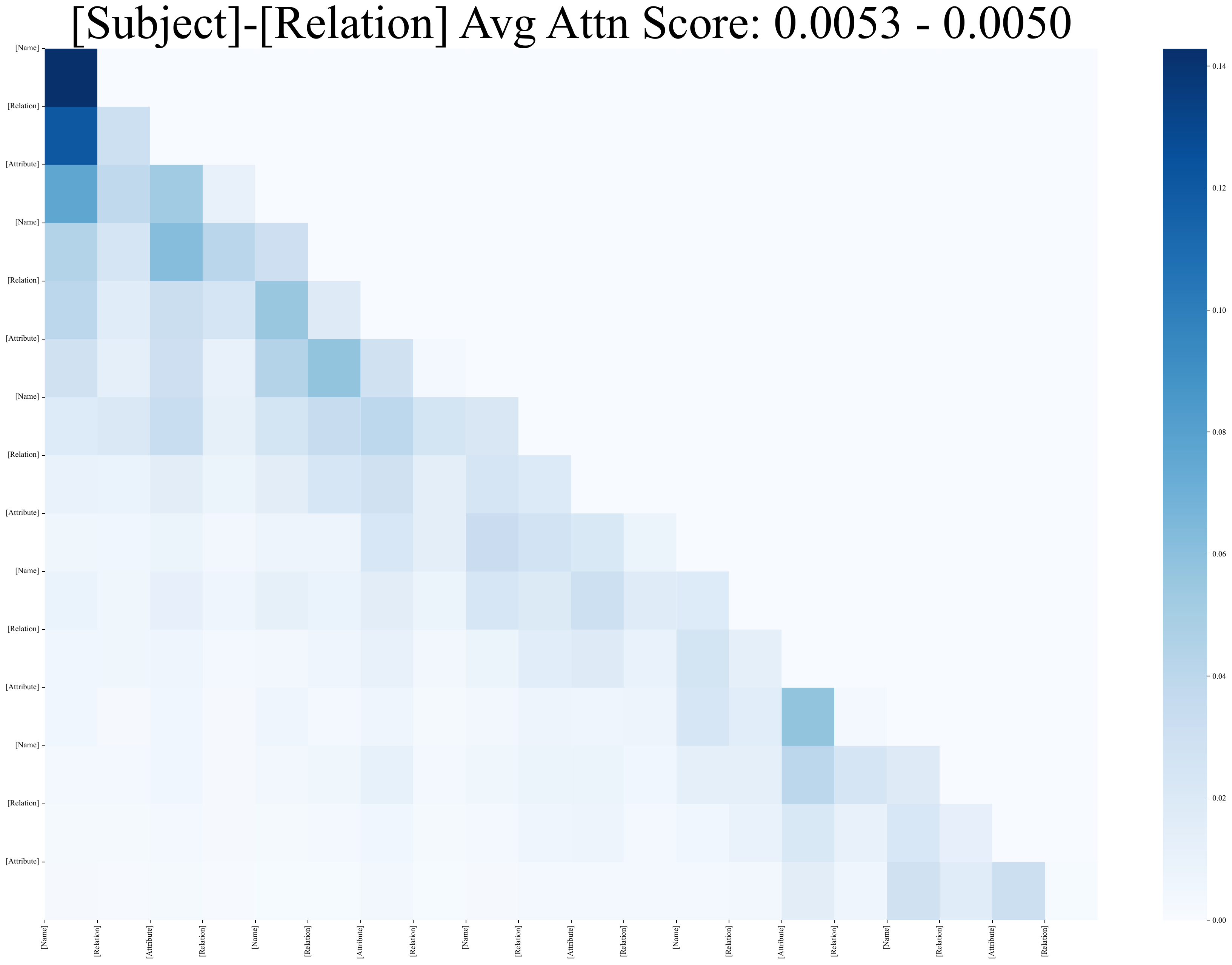}
    \caption*{(a) 1Aug-Pythia}
  \end{minipage}
  \begin{minipage}[b]{0.24\textwidth} 
    \includegraphics[width=\textwidth]{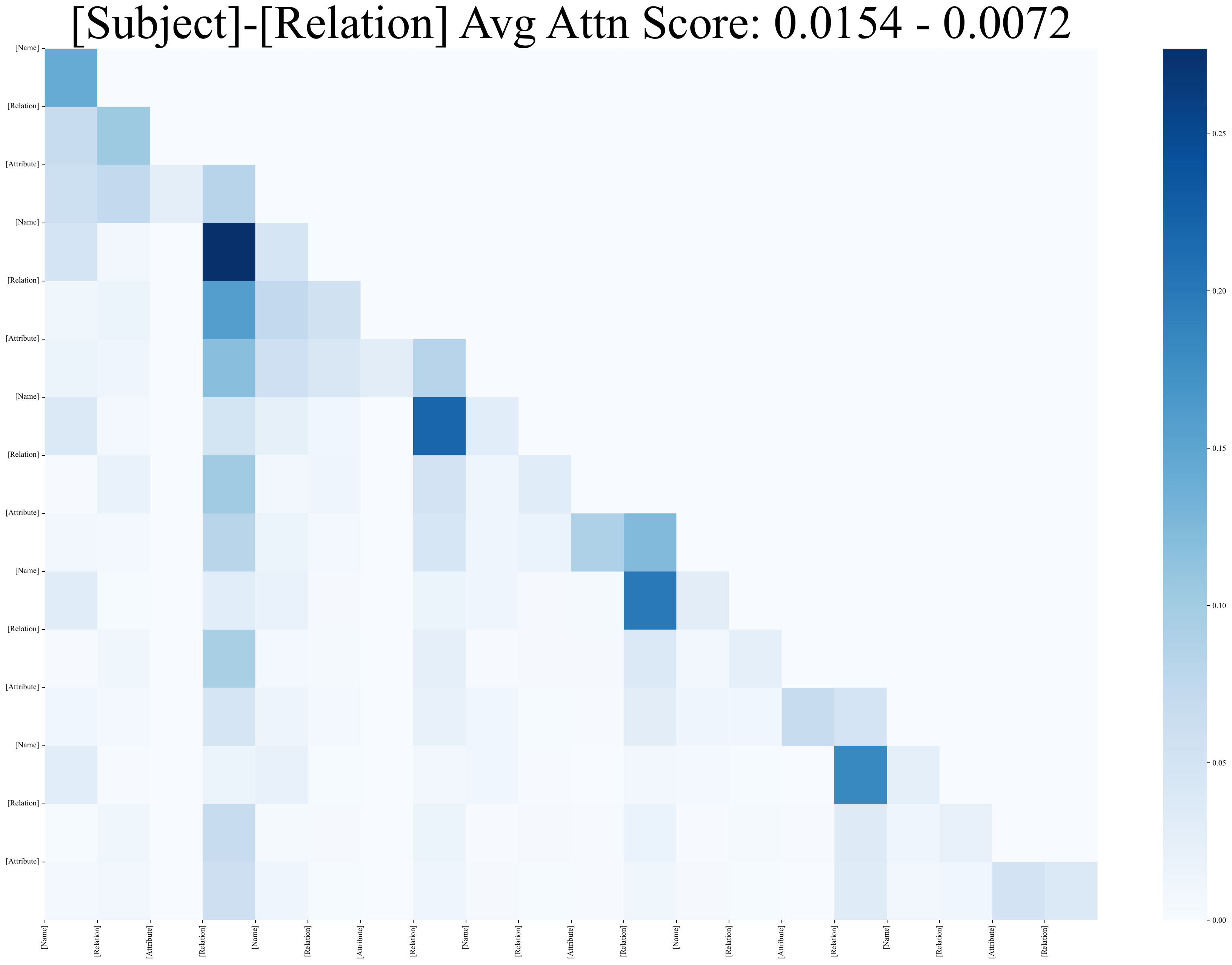}
    \caption*{(b) 5Aug-Pythia}
  \end{minipage}
  \begin{minipage}[b]{0.24\textwidth} 
    \includegraphics[width=\textwidth]{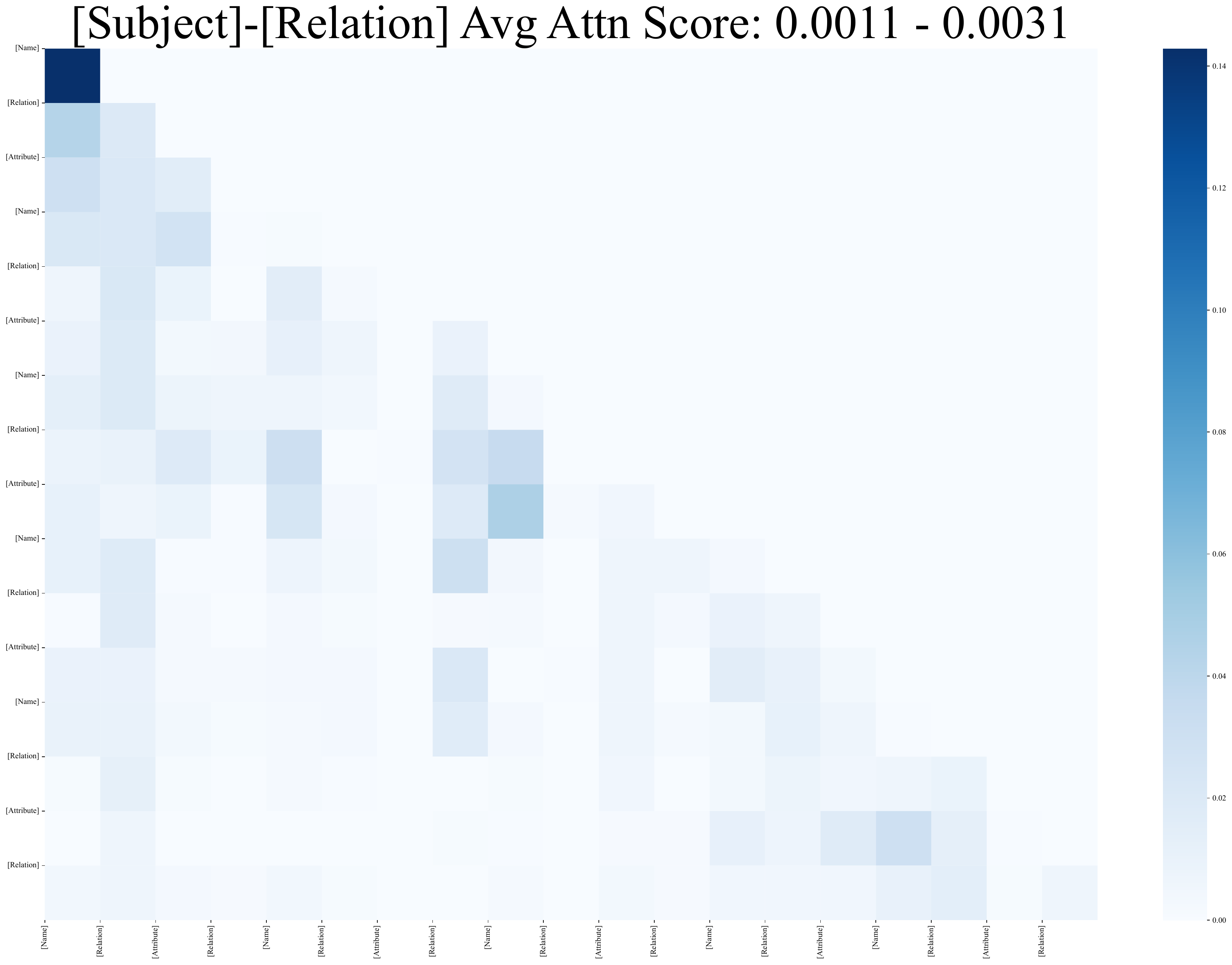}
    \caption*{(c) 1Aug-Qwen}
  \end{minipage}
  \begin{minipage}[b]{0.24\textwidth} 
    \includegraphics[width=\textwidth]{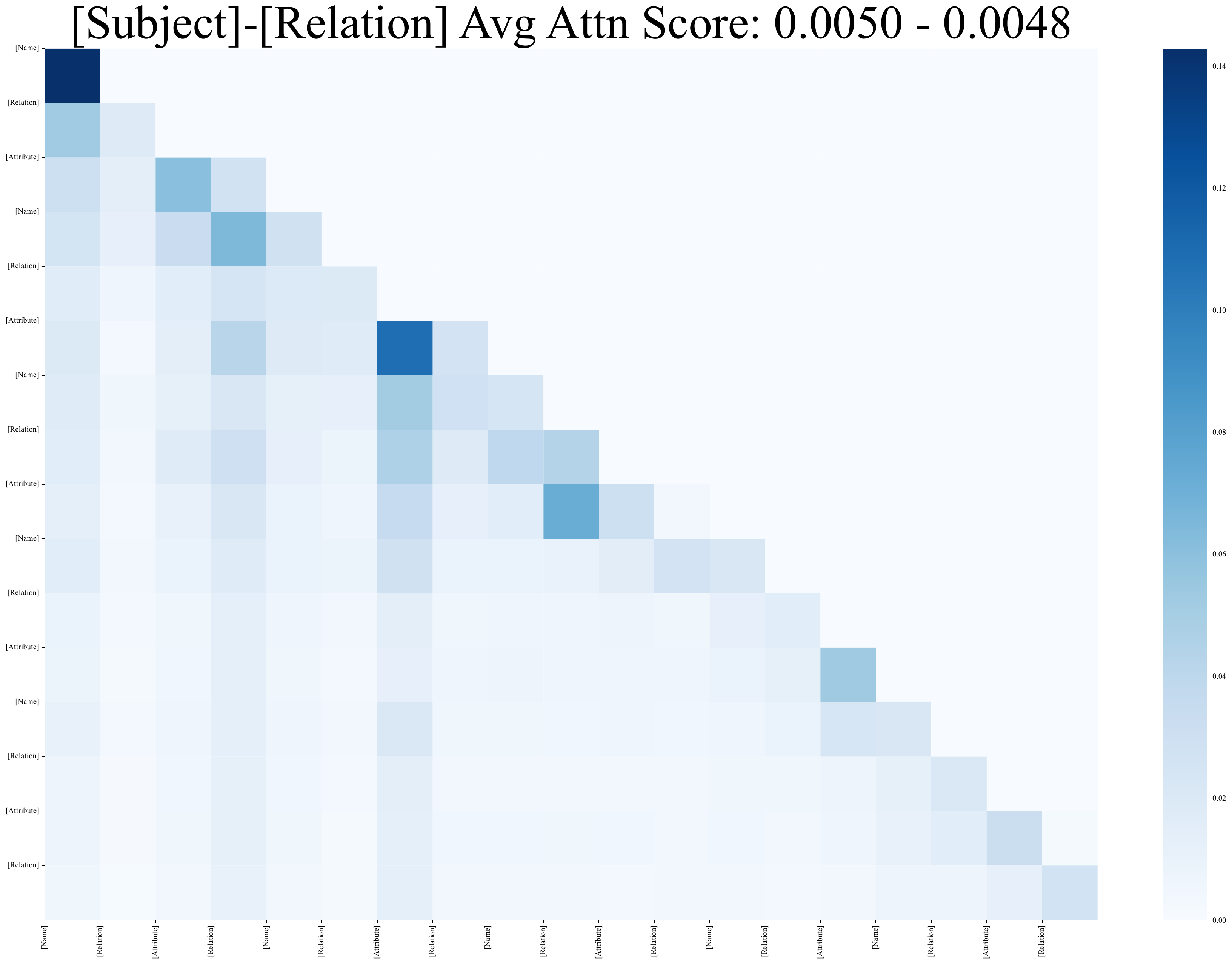}
    \caption*{(d) 5Aug-Qwen}
  \end{minipage}
   \caption{The attention assignment over different types of tokens in the test biography of the example individual. The assigned attention scores are averaged across tokens within each category (subject or relation). See detailed token-level attention score in Fig.~\ref{fig:attention_full_pythia} and \ref{fig:attention_full_qwen}.}
  \label{fig:attention}
\end{figure*}

%% file: Tables/data_augment.tex
\begin{table*}[t]
\centering
\caption{Performance of the LMs with different augmentation strategies. 5/1/P-Aug represents that one individual corresponds to 5/1/Possion($\lambda=5$) biographies. 
\colorbox[HTML]{E2EFDA}{Green cell} indicates that there is no significant generalization gap, while \colorbox[HTML]{FFC7CE}{pink cell} indicates a significant generalization gap conversely.
}
\label{tab:data_aug}
\resizebox{0.9\textwidth}{!}{
\begin{tabular}{ccccccccccccc}
\hline
\multirow{2}{*}{Aug} & 
\multicolumn{3}{c}{Pythia-160m (Train)} & \multicolumn{3}{c}{Pythia-160m (Test)} & \multicolumn{3}{c}{Qwen2.5-0.5B(Train)} & \multicolumn{3}{c}{Qwen2.5-0.5B(Test)} \\
\cmidrule(lr){2-4} \cmidrule(lr){5-7} 
\cmidrule(lr){8-10} \cmidrule(lr){11-13} &
 hFTA   & sFTA   & EM& hFTA   & sFTA   & EM    & hFTA   & sFTA   & EM& hFTA   & sFTA   & EM    \\ \hline
5-Aug  & 95.44  & 93.64  & 81.88  & \cellcolor[HTML]{E2EFDA}94.65  & \cellcolor[HTML]{E2EFDA}92.64  & \cellcolor[HTML]{E2EFDA}81.27 & 95.75  & 94.63  & 83.93  & \cellcolor[HTML]{E2EFDA}95.08  & \cellcolor[HTML]{E2EFDA}93.88  & \cellcolor[HTML]{E2EFDA}83.43 \\
1-Aug  & 95.82  & 95.27  & 8.85   & \cellcolor[HTML]{FFC7CE}15.42  & \cellcolor[HTML]{FFC7CE}14.43  & \cellcolor[HTML]{FFC7CE}2.71  & 96.15  & 95.76  & 51.43  & \cellcolor[HTML]{FFC7CE}13.80  & \cellcolor[HTML]{FFC7CE}12.98  & \cellcolor[HTML]{FFC7CE}2.94  \\
P-Aug & 95.46  & 93.52  & 81.30  & \cellcolor[HTML]{E2EFDA}94.37  & \cellcolor[HTML]{E2EFDA}91.77  & \cellcolor[HTML]{E2EFDA}78.06 & 95.99  & 95.43  & 83.73  & \cellcolor[HTML]{E2EFDA}95.25  & \cellcolor[HTML]{E2EFDA}94.49  & \cellcolor[HTML]{E2EFDA}83.18 \\ \hline
\end{tabular}
}
\end{table*}

%% file: input_figures/single_layer.tex
\begin{figure}[b]
  \centering
  \begin{minipage}[b]{0.22\textwidth} 
    \includegraphics[width=\textwidth]{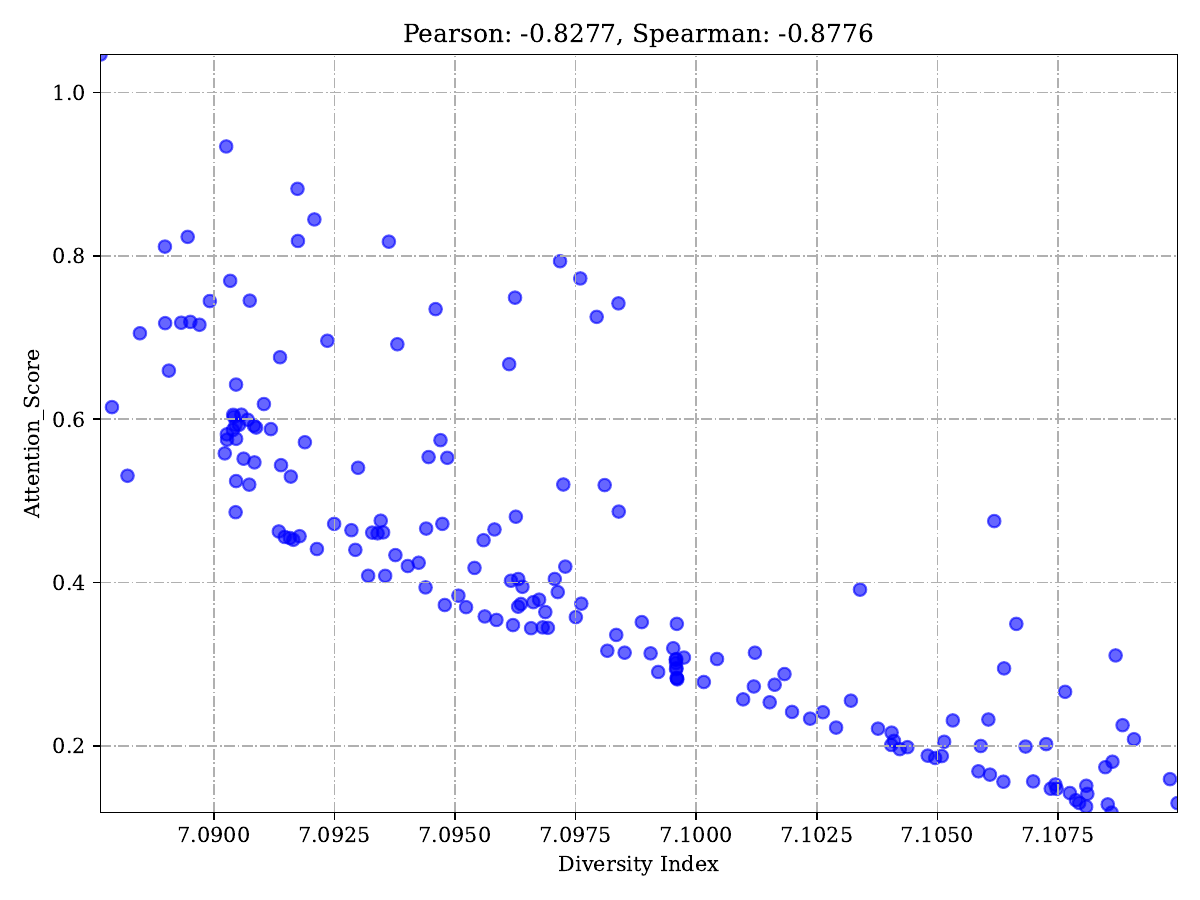}
    \caption*{(a) Linear}
    \label{fig:single_linear}
  \end{minipage}
  \begin{minipage}[b]{0.22\textwidth} 
    \includegraphics[width=\textwidth]{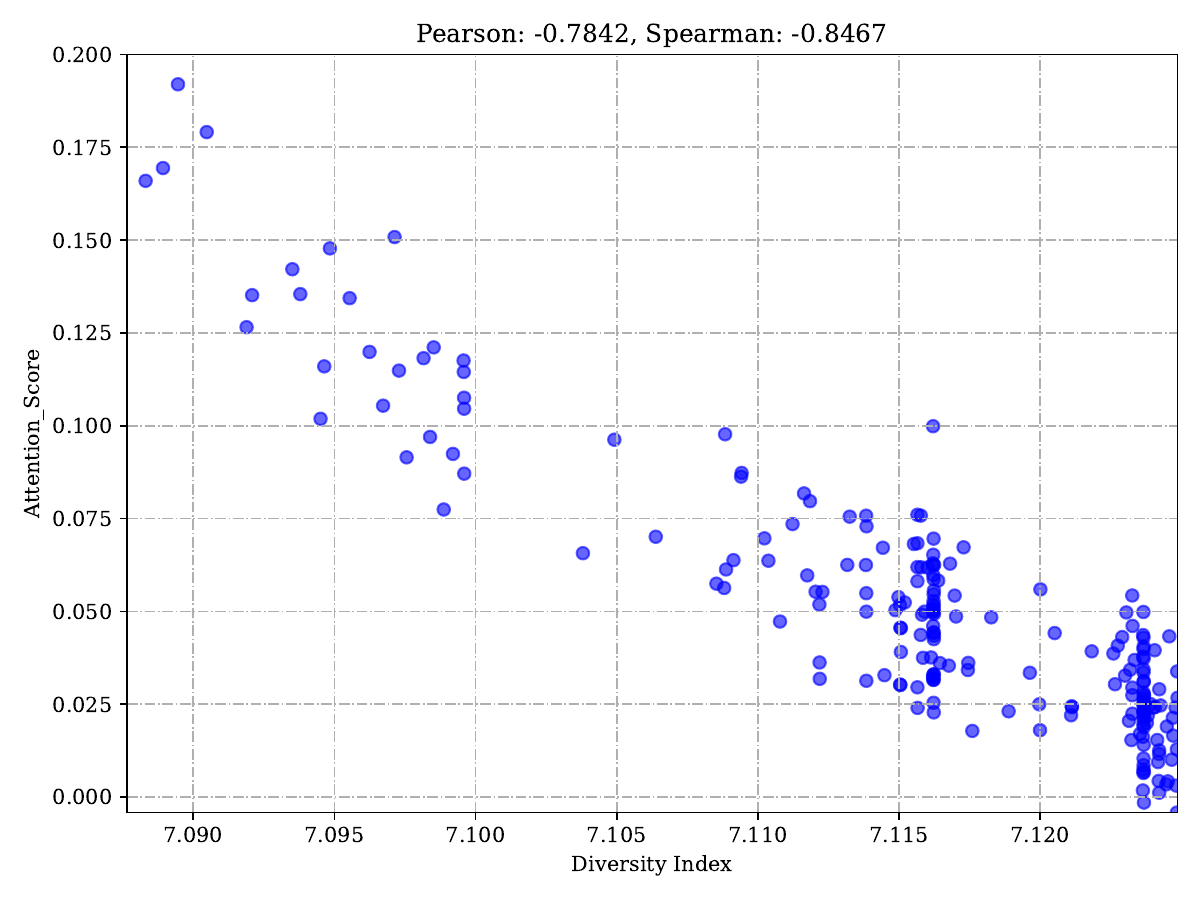}
    \caption*{(b) Softmax}
    \label{fig:softmax}
  \end{minipage}
   \caption{Attention Score and the corresponding Diversity Index. The Correlation is significant across two attention.}
  \label{fig:single_layer}
\end{figure}

%% file: input_figures/regularization.tex
\begin{figure*}[t]
  \centering
  \begin{minipage}[b]{0.32\textwidth} 
    \includegraphics[width=\textwidth]{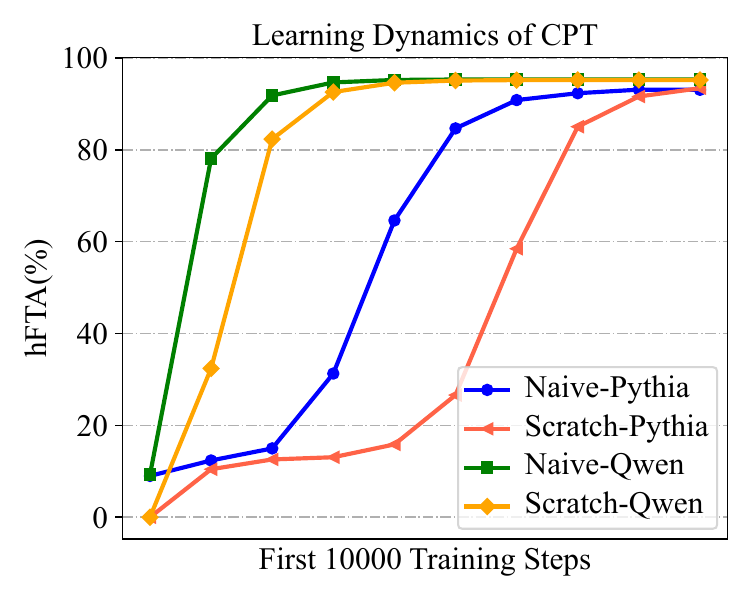}
  \end{minipage}
  \begin{minipage}[b]{0.32\textwidth} 
    \includegraphics[width=\textwidth]{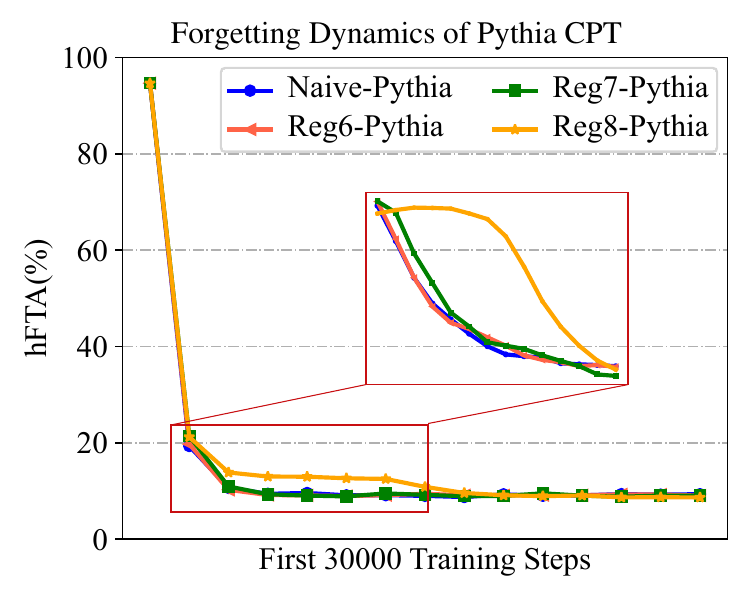}
  \end{minipage}
  \begin{minipage}[b]{0.32\textwidth} 
    \includegraphics[width=\textwidth]{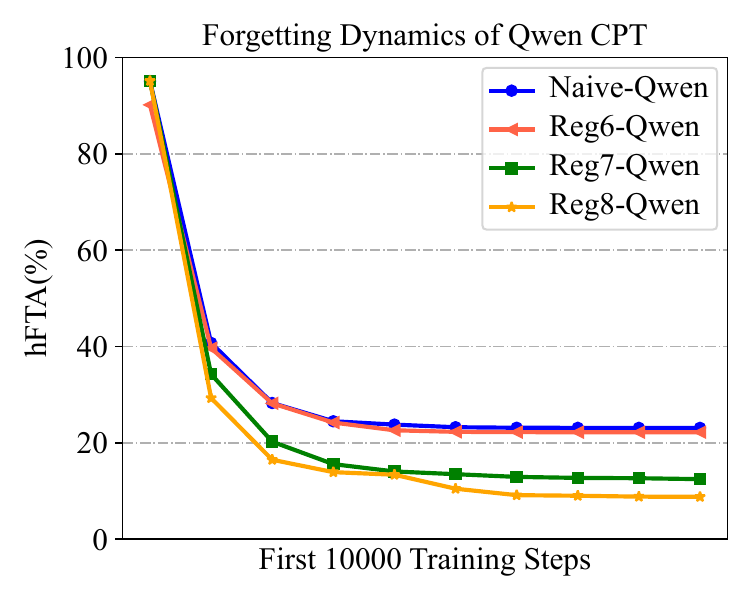}
  \end{minipage}
   \caption{The left figure illustrates the learning dynamics under Naive/Scratch CPT (measured by continual hFTA), while the middle and right show the forgetting dynamics under Naive/Regularization CPT (measured by original hFTA). We perform uniform down-sampling with respect to the step count. All three figures employ Exponential Moving Average with $\alpha=0.8$) to smooth the noise.}
  \label{fig:learning_dynamics_cpt}
\end{figure*}

%% file: Tables/pythia_biography.tex
\begin{table*}[t]
\footnotesize
\centering
\caption{Performance of the \texttt{Pythia-160M} with different data replay and model update strategies. $0.5$--$0.9$ denotes the mixing ratio of the continual corpus. Random snippet selection serves as an ablation study and a weak baseline.\texttt{Full}, \texttt{LoRA}, and \texttt{Freeze} denote full-parameter, rank$128$ LoRA, and freezing the first 6 layers during tuning, respectively. When the \texttt{sFTA} in continual learning exceeds 90\%, the best results in mitigating catastrophic forgetting are highlighted in \textbf{bold}, while the second-best are \underline{underlined}. 
}
\label{tab:data_replay_pythia}
\resizebox{1.00\textwidth}{!}{
\begin{tabular}{ccc|cccccccccccc}
\hline\hline
& & & \multicolumn{3}{c}{\textbf{0.5}} & \multicolumn{3}{c}{\textbf{0.67}} & \multicolumn{3}{c}{\textbf{0.8}} & \multicolumn{3}{c}{\textbf{0.9}} \\
\textbf{Target} & \textbf{Replay} & \textbf{Update} 
& hFTA & sFTA & EM 
& hFTA & sFTA & EM 
& hFTA & sFTA & EM 
& hFTA & sFTA & EM \\ 
\hline

\multirow{9}{*}{\rotatebox{90}{Continual}} 
& \multirow{3}{*}{Random} 
& Full   
& 93.62 & 90.37 & 72.63 
& 94.14 & 91.37 & 78.80 
& 94.10 & 91.55 & 78.36 
& 93.87 & 90.61 & 72.01 \\

& 
& LoRA  
& 81.61 & 77.30 & 54.57 
& 91.96 & 83.24 & 57.51 
& 93.53 & 87.85 & 64.06 
& 94.14 & 89.78 & 68.22 \\

& 
& Freeze 
& 94.03 & 90.08 & 70.54 
& 94.26 & 91.67 & 74.89 
& 94.28 & 91.86 & 75.40 
& 94.31 & 91.64 & 73.19 \\ 
\cline{2-15}

& \multirow{3}{*}{LAMOL} 
& Full    
& 94.19 & 92.58 & 85.33 
& 94.35 & 92.94 & 86.84 
& 94.38 & 93.13 & 87.29 
& 94.08 & 92.67 & 79.76 \\

& 
& LoRA   
& 90.05 & 88.78 & 70.14 
& 92.83 & 87.61 & 76.99 
& 93.31 & 88.81 & 69.68 
& 93.61 & 89.68 & 66.96 \\

& 
& Freeze  
& 94.34 & 92.47 & 82.50 
& 94.43 & 92.62 & 81.01 
& 94.51 & 93.13 & 78.56 
& 94.33 & 92.06 & 75.03 \\ 
\cline{2-15}

& \multirow{3}{*}{\textbf{STOC}} 
& Full     
& 93.65 & 90.47 & 71.41 
& 94.11 & 91.48 & 76.62 
& 94.15 & 91.46 & 77.04 
& 94.17 & 92.13 & 77.43 \\

& 
& LoRA    
& 88.19 & 79.30 & 59.98 
& 91.91 & 82.22 & 63.72 
& 93.64 & 87.26 & 65.33 
& 94.26 & 90.08 & 69.34 \\

& 
& Freeze  
& 93.92 & 90.29 & 70.65 
& 94.40 & 92.04 & 74.24 
& 94.33 & 92.11 & 75.73 
& 94.43 & 91.96 & 74.93 \\ 
\hline

\multirow{9}{*}{\rotatebox{90}{Original}} 
& \multirow{3}{*}{Random} 
& Full   
& 18.96 & 17.68 & 3.14  
& 18.29 & 17.07 & 2.56  
& 16.55 & 13.48 & 1.22  
& 10.95 & 10.35 & 0.92 \\

& 
& LoRA  
& 15.47 & 13.86 & 1.74  
& 14.29 & 12.12 & 1.44  
& 11.56 & 7.71  & 0.98  
& 9.70  & 8.32  & 0.48 \\

& 
& Freeze 
& 23.42 & 22.48 & 6.93  
& 21.80 & 21.02 & 6.43  
& 20.95 & 20.15 & 5.30  
& 18.75 & 17.73 & 3.87 \\ 
\cline{2-15}

& \multirow{3}{*}{LAMOL} 
& Full    
& 20.98 & 19.90 & 5.95  
& 20.92 & 19.80 & 6.01  
& 20.45 & 19.29 & 5.75  
& 12.89 & 12.15 & 2.97 \\

& 
& LoRA   
& 18.71 & 17.26 & 3.72  
& 17.47 & 13.14 & 4.66  
& 11.54 & 8.03  & 1.41  
& 9.87  & 8.12  & 1.23 \\

& 
& Freeze  
& 25.39 & 22.18 & 9.29  
& 23.14 & 21.69 & 9.47  
& 22.78 & 21.33 & 9.22  
& 20.06 & 18.88 & \underline{7.58} \\ 
\cline{2-15}

& \multirow{3}{*}{\textbf{STOC}} 
& Full     
& \underline{55.91} & \underline{51.54} & \underline{29.84} 
& \underline{54.86} & \underline{50.39} & \underline{29.64} 
& \underline{49.78} & \underline{45.40} & \underline{25.17} 
& \underline{22.03} & \underline{19.70} & 7.18 \\

& 
& LoRA    
& 51.36 & 44.75 & 18.20 
& 49.93 & 41.76 & 18.26 
& 46.96 & 40.25 & 17.84 
& 15.14 & 11.52 & 1.96 \\

& 
& Freeze  
& \textbf{59.07} & \textbf{54.33} & \textbf{33.89} 
& \textbf{58.70} & \textbf{53.80} & \textbf{32.83} 
& \textbf{55.68} & \textbf{50.67} & \textbf{29.84} 
& \textbf{44.82} & \textbf{40.54} & \textbf{21.62} \\ 
\hline\hline
\end{tabular}
}
\end{table*}

%% file: Tables/knowledge_edit.tex
\begin{table*}[t]
\footnotesize
\centering
\caption{Comparison of STOC with existing methods measured by averaged soft token accuracy.  
The error bars are calculated from five sets of random seed training.
The LMs are trained with different freezing layers and the best results are reported.
If the accuracy on new knowledge falls below 95\% of the Naive baseline, we consider it fails to acquire enough new knowledge as expected, marked in {\color[HTML]{808080}gray}.}
\label{tab:real_dataset}
\begin{tabular}{crcccccc}
\toprule
& \multicolumn{1}{c}{}    & \multicolumn{2}{c}{\textbf{ZSRE}} & \multicolumn{2}{c}{\textbf{Wiki\_Bio}} & \multicolumn{2}{c}{\textbf{Wiki\_Recent}} \\
\multirow{-2}{*}{}    & \multicolumn{1}{c}{\multirow{-2}{*}{\textbf{Method}}} & Original& Continual  & Original& Continual  & Original& Continual  \\ \hline
& Naive    & 24.42\tiny{$\pm$ 0.27}   & 48.48\tiny{$\pm$ 0.21}   & 13.22\tiny{$\pm$ 0.12}   & 32.21\tiny{$\pm$ 0.20}   & 18.10\tiny{$\pm$ 0.21}   & 20.39\tiny{$\pm$ 0.25}   \\
& LAMOL ($\alpha=0.5$)    & 24.48\tiny{$\pm$ 0.22}   & 47.56\tiny{$\pm$ 0.29}   & 22.31\tiny{$\pm$ 0.15}   & 31.33\tiny{$\pm$ 0.17}   & 16.32\tiny{$\pm$ 0.21}   & 19.27\tiny{$\pm$ 0.20}   \\
& LAMOL ($\alpha=0.8$)    & 24.95\tiny{$\pm$ 0.32}   & 47.12\tiny{$\pm$ 0.21}   & 20.46\tiny{$\pm$ 0.20}   & 31.54\tiny{$\pm$ 0.26}   & {\color[HTML]{808080} 16.16\tiny{$\pm$ 0.24}} & {\color[HTML]{808080} 17.35\tiny{$\pm$ 0.16}} \\
& STOC ($\alpha=0.5$)& 26.88\tiny{$\pm$ 0.23}   & 47.94\tiny{$\pm$ 0.31}   & {\color[HTML]{808080} 22.89\tiny{$\pm$ 0.24}} & {\color[HTML]{808080} 28.05\tiny{$\pm$ 0.19}} &  17.58\tiny{$\pm$ 0.13} & 
19.23\tiny{$\pm$ 0.17} \\
\multirow{-5}{*}{\rotatebox{90}{Pythia}}  & STOC ($\alpha=0.8$)& \textbf{27.56\tiny{$\pm$ 0.20}}& 47.23\tiny{$\pm$ 0.19}   & \textbf{23.86\tiny{$\pm$ 0.10}}& 31.88\tiny{$\pm$ 0.16}   & \textbf{19.36\tiny{$\pm$ 0.13}}& 19.56\tiny{$\pm$ 0.19}   \\ \hline
& Naive    & 34.58\tiny{$\pm$ 0.16}   & 63.28\tiny{$\pm$ 0.28}   & 32.33\tiny{$\pm$ 0.16}   & 35.50\tiny{$\pm$ 0.13}   & 19.28\tiny{$\pm$ 0.14}   & 28.42\tiny{$\pm$ 0.18}   \\
& LAMOL ($\alpha=0.5$)    & {\color[HTML]{808080} 37.54\tiny{$\pm$ 0.19}} & {\color[HTML]{808080} 58.37\tiny{$\pm$ 0.22}} & 31.29\tiny{$\pm$ 0.22}   & 34.49\tiny{$\pm$ 0.23}   & 20.48\tiny{$\pm$ 0.17}   & 27.19\tiny{$\pm$ 0.21}   \\
& LAMOL ($\alpha=0.8$)    & {\color[HTML]{808080} 36.71\tiny{$\pm$ 0.23}} & {\color[HTML]{808080} 57.44\tiny{$\pm$ 0.17}} & 34.67\tiny{$\pm$ 0.26}   & 34.64\tiny{$\pm$ 0.17}   & 20.15\tiny{$\pm$ 0.22}   & 27.85\tiny{$\pm$ 0.19}   \\
& STOC ($\alpha=0.5$)& 37.12\tiny{$\pm$ 0.17}   & 62.26\tiny{$\pm$ 0.12}   & \textbf{35.57\tiny{$\pm$ 0.06}} & 35.46\tiny{$\pm$ 0.10}   & \textbf{21.40\tiny{$\pm$ 0.11}}& 28.75\tiny{$\pm$ 0.16}   \\
\multirow{-5}{*}{\rotatebox{90}{Qwen2.5}} & STOC ($\alpha=0.8$)& \textbf{37.47\tiny{$\pm$ 0.14}}& 62.59\tiny{$\pm$ 0.18}   & {\color[HTML]{808080} 35.28\tiny{$\pm$ 0.13}} & {\color[HTML]{808080} 33.16\tiny{$\pm$ 0.05}} & 20.12\tiny{$\pm$ 0.15}   & 27.34\tiny{$\pm$ 0.18} \\ \toprule
\end{tabular}
\end{table*}

%% file: Tables/industry_corpus.tex
\begin{table}[t]
\footnotesize
\centering
\caption{Comparison of STOC with existing methods on most popular datasets.  
The LMs are trained with different freezing layers and the best results are reported.
The model responses are generated through prompts with 5-shot examples.
}
\label{tab:large_dataset}
\resizebox{0.5\textwidth}{!}{
\begin{tabular}{crcccccc}
\toprule
& \multicolumn{1}{c}{}    & \multicolumn{2}{c}{\textbf{MMLU}} & \multicolumn{2}{c}{\textbf{MMLU-Redux-2.0}} & \multicolumn{2}{c}{\textbf{SuperGPQA}} \\
\multirow{-2}{*}{}    & \multicolumn{1}{c}{\multirow{-2}{*}{\textbf{Method}}} & Original& Continual  & Original& Continual  & Original& Continual  \\ \hline
& Naive    & 
22.49   & 
24.40   & 
21.93   & 
24.39   & 
9.48   & 
10.98  \\
& LAMOL &
38.87 &
29.42 &
39.04 &
28.05 &
10.60 &
13.87 \\
\multirow{-3}{*}{\rotatebox{90}{0.6B}} 
& STOC &
\textbf{40.17} &
\textbf{30.92} &
\textbf{40.26} &
\textbf{32.93} &
\textbf{10.76} &
\textbf{15.24} \\
\hline
& Naive    & 
23.63   & 
24.01   & 
23.48   & 
24.17   & 
9.40   & 
9.60   \\
& LAMOL  & 
39.28   & 
28.91   & 
40.53   & 
31.71   & 
10.63   & 
13.35   \\
\multirow{-3}{*}{\rotatebox{90}{1.7B}} 
& STOC   & 
\textbf{42.49}   & 
\textbf{32.03}   & 
\textbf{42.03}   & 
\textbf{33.49}   & 
\textbf{11.01}   & 
\textbf{15.85}   \\
\toprule
\end{tabular}}
\end{table}

%% file: appendix.tex
% \section{Statements of LLM usage}
% In the preparation of this manuscript, Large Language Models (LLMs) were used in a limited but supportive capacity. 
% First, LLMs assisted in improving the clarity, fluency, and readability of the written text by suggesting alternative phrasings and polishing grammar and style, while we guarantee that the substantive content and intellectual contributions remain the responsibility of the authors. 
% Second, in the synthetic data experiments, the LLM was employed to generate candidate templates or perform data augmentation by rewriting, which have been described in the main paper.
% We ensure all the generated content is carefully reviewed and selected by humans to ensure that they contained no potentially harmful information.
% Importantly, LLMs were not used for research ideation, data analysis, or producing research findings. 
% All conceptual development, methodological design, and substantive contributions are solely attributable to the authors.

\section{Overall Framwork and Notation Table}\label{sec:notations}

Figure.~\ref{fig:whole_framework} explains the overall roadmap of this paper and provides a toy example.
Table.~\ref{tab:notations} gives the notations of the main quantities in the paper.
\begin{table}[H]\footnotesize
\centering
\caption{Overall notation table of the main symbols in the paper.}
\label{tab:notations}
\resizebox{0.9\textwidth}{!}{
\begin{tabular}{rcl}
\hline
\multicolumn{3}{c}{\textbf{Basic Notations}} \\ \hline
\multicolumn{1}{c|}{$t$} & \multicolumn{1}{c|}{$\mathbb{Z}_+$} & Training Step, the initial moment for PT/CPT is 0 \\
\multicolumn{1}{c|}{$D$} & \multicolumn{1}{c|}{$\mathbb{Z}_+$} & Vocabulary Size \\
\multicolumn{1}{c|}{$d$} & \multicolumn{1}{c|}{$\mathbb{Z}_+$} & Hidden size of the language model \\ 
\multicolumn{1}{c|}{$L$} & \multicolumn{1}{c|}{$\mathbb{Z}_+$} & Input sequence (prompt) length \\ 
\multicolumn{1}{c|}{$s$} & \multicolumn{1}{c|}{$[D]$} & One token in the vocabulary \\ 
\multicolumn{1}{c|}{$x_l$} & \multicolumn{1}{c|}{$\mathbb{R}$} & The $l$-th token in the input sequence \\ 
\multicolumn{1}{c|}{$\bm{x}_l$} & \multicolumn{1}{c|}{$\mathbb{R}^D$} & One hot vector corresponding to token $x_l$ \\ 
\multicolumn{1}{c|}{$x_{L+1} = q$} & \multicolumn{1}{c|}{$\mathbb{R}$} & The unique query token to calculate attention scores.  \\ 
\multicolumn{1}{c|}{$x_{L+2}$} & \multicolumn{1}{c|}{$\mathbb{R}$} & The object token, ground truth for training  \\ 
\multicolumn{1}{c|}{$\bm{X}$} & \multicolumn{1}{c|}{$\mathbb{R}^{L\times D}$} & One input sequence  \\ 
\multicolumn{1}{c|}{$\delta_s(t)$} & \multicolumn{1}{c|}{$\mathbb{Z}_+$} & The number of token $s$ in the training sequence at time $t$  \\ 
\multicolumn{1}{c|}{$\mathcal{O}_s$} & / & The multiset of objects associated with token $s$ during training \\
\hline
\multicolumn{3}{c}{\textbf{Learnable Parameters}}\\ \hline
\multicolumn{1}{c|}{$\bm{E}$} & \multicolumn{1}{c|}{$\mathbb{R}^{D\times d}$} & The embedding matrix and tied unembedding matrix  \\ 
\multicolumn{1}{c|}{$\bm{W}_{Q}, \bm{W}_K$} & \multicolumn{1}{c|}{$\mathbb{R}^{d\times d}$} & The parameters for attention score calculation  \\ 
\multicolumn{1}{c|}{$\bm{W}_{V}, \bm{W}_O$} & \multicolumn{1}{c|}{$\mathbb{R}^{d\times d}$} & The parameters for hidden state calculation  \\ 
\multicolumn{1}{c|}{$\bm{Y}$} & \multicolumn{1}{c|}{$\mathbb{R}^{L\times D}$} & Equivalent reparameterization results, $\bm{Y} = \bm E \bm W_O \bm W_V^\top \bm E^\top$  \\ 
\multicolumn{1}{c|}{$\bm{Z}$} & \multicolumn{1}{c|}{$\mathbb{R}^{L\times D}$} & Equivalent reparameterization results, $\bm Z= \bm E \bm W_K \bm W_Q^\top \bm E^\top/\sqrt{d}$  \\
\multicolumn{1}{c|}{$y_{o, s}$} & \multicolumn{1}{c|}{$\mathbb{R}$} & The element of $\bm{Y}$ in row $o$ and column $s$  \\
\multicolumn{1}{c|}{$\bm{y}_{s}(t)$} & \multicolumn{1}{c|}{$\mathbb{R}^D$} & The $s$-column of $\bm Y$ at time $t$\\
\multicolumn{1}{c|}{$z_s$} & \multicolumn{1}{c|}{$\mathbb{R}$} & The element of $\bm{Z}$ in row $s$ and column $q$ \\ \hline
\multicolumn{3}{c}{\textbf{Hyperparameters}}\\ \hline
\multicolumn{1}{c|}{$\eta_Y, \eta_Z$} & \multicolumn{1}{c|}{$\mathbb{R}$} & Learning rate for parameter $\bm Y, \bm Z$ respectively \\ 
\multicolumn{1}{c|}{$\epsilon$} & \multicolumn{1}{c|}{$\mathbb{R}$} & The initialization scale \\
\multicolumn{1}{c|}{$k$} & \multicolumn{1}{c|}{$\mathbb{R}$} & The regularization coefficient \\
\multicolumn{1}{c|}{$\bm{w}_s$} & \multicolumn{1}{c|}{$\mathbb{R}^D$} & The importance weight for $\bm{y}_s$ \\
\multicolumn{1}{c|}{$\alpha$} & \multicolumn{1}{c|}{$\mathbb{R}$} & The percentage of CPT data \\
\hline
\multicolumn{3}{c}{\textbf{Intermediate Variables}}\\ \hline
\multicolumn{1}{c|}{$\overline{\bm{x}}_s$} & \multicolumn{1}{c|}{$\mathbb{R}^D$} & Average of the related object tokens corresponding to token $s$ \\
\multicolumn{1}{c|}{$\bm{u}_s$} & \multicolumn{1}{c|}{$\mathbb{R}^D$} & The convergence center of $\bm{y}_s$ \\
\multicolumn{1}{c|}{$\beta_s$} & \multicolumn{1}{c|}{$\mathbb{R}^D$} & The Normalization coefficient for $\bm{u}_s$ \\
\multicolumn{1}{c|}{$\bm{\xi}_s(t)$} & \multicolumn{1}{c|}{$\mathbb{R}^D$} & Perturbation term from label-prediction difference at time $t$ \\
\multicolumn{1}{c|}{$\bm{e}_s(t)$} & \multicolumn{1}{c|}{$\mathbb{R}^D$} & The error term between $\bm{y}_s(t)$ and $\bm{u}_s$\\
\multicolumn{1}{c|}{$\bm{H}_s$} & \multicolumn{1}{c|}{$\mathbb{R}^{D\times D}$} & Jacobian matrix for token $s$ \\
\multicolumn{1}{c|}{$\lambda^+_{\max}(\cdot), \lambda^+_{\min}(\cdot)$} & \multicolumn{1}{c|}{$\mathbb{R}^{D\times D}\mapsto \mathbb{R}_+$} & The largest/smallest eigenvalue \\
\hline
\end{tabular}}
\end{table}

\section{Related Works}\label{sec:related_works}
Prior research can generally be divided into three aspects: \textit{Catastrophic Forgetting Mechanism}, \textit{Continual Pretraining Methods}, and \textit{Factual Knowledge Acquisition Phenomena}.
We primarily focus on the parts related to autoregressive LMs implemented via neural networks.

\paragraph{Catastrophic Forgetting Mechanism}
Since catastrophic forgetting in neural networks was first reported in \citet{mccloskey1989catastrophic}, many studies have endeavored to explain it from a mechanistic perspective.
Start from a linear regression model~\citep{evron2022catastrophic}, research on forgetting states has gradually extended to more complex models~\citep{lee2021continual} and complex task ordering~\citep{ding2024understanding}. 
Recently, another line of methods attempts to measure the internal forgetting state using probing techniques~\citep{davari2022probing, chen2023forgetting}. 
These studies have identified differences in catastrophic forgetting across different layers of neural networks~\citep{wu2022pretrained}.
These studies laid the foundation for analyzing continual learning mechanisms and have inspired subsequent research on continual learning in Transformer architectures.

\paragraph{Continual Pretraining Methods}
Most of the current methods for alleviating forgetting during the continuous learning of large language models center on data replay or parameter constraints. 
LAMOL~\citep{sunlamol} is a generative data replay method that prompts the model to generate samples of previous tasks by inserting special tokens during training. Another generative approach, HMI-LAMOL ~\citep{maekawa2023generative}, builds upon it by using hippocampal memory indexing to enhance the generative replay.
EWC ~\citep{sunlamol} measures the importance of model parameters using the Fisher Information Matrix and adopts a regularization loss to constrain updates to critical parameters for previous tasks. 
MAS~\citep{aljundi2018memory} computes the parameter importance based on the sensitivity of the network's output function to parameter changes.
GPM~\citep{sahagradient} obtains the Core Gradient Space (CGS) via SVD and constrains updates to the orthogonal subspace to avoid interfering with past knowledge.
\citet{zhengspurious} freeze the bottom layers of the model (e.g., input embedding layers) to prevent task alignment from being disrupted.
Although these methods incorporate distinctive designs, they do not explicitly leverage the unique characteristics of Transformer architectures, leaving substantial room for further improvement.

\paragraph{Factual Knowledge Acquisition Phenomena}
Pretrained language models are capable of complex tasks, leading to the belief that they have learned a wealth of factual knowledge~\citep{zhao2023survey}. 
However, for a specific piece of fact, whether it has been stored is unclear due to the intricacy of pretraining corpora.
To answer this question, \citet{allen2023physics} firstly propose forming empirical conclusions with fully synthetic data and construct \texttt{Biography} dataset.
Since then this experimental setup has been widely adopted, leading to a series of conclusions about FKA: 
(1)After applying data augmentation strategies, the model is able to learn all factual knowledge during pretraining, which remains intact during instruction tuning~\citep{allen2023physics}.
(2)The amount of factual knowledge a model can store is proportional to its parameter count~\citep{allen2024physics}.
(3)If the parameter is insufficient, the model prioritizes storing higher-frequency knowledge~\citep{lyu2025datamixing}.
(4)The knowledge stored is fragile when CPT without data replay~\citep{zhengspurious,zucchet2025language}.
These works provide reliable empirical findings, yet still lack a solid theoretical explanation.

\section{Discussion and Future Work}\label{sec:discussion}

\paragraph{Rationale for Using a Synthetic Dataset.} 
The biggest challenge in studying cFKA lies in ensuring the independence of the knowledge contained in the training datasets. 
Firstly, the CPT dataset should not include content related to PT knowledge, otherwise the evaluation of old knowledge forgetting would be biased. 
Secondly, if a data replay strategy is introduced, it is necessary to ensure that the replay dataset indeed contains pre-training knowledge.
Since real-world datasets rarely come with annotated knowledge content, it is challenging to satisfy the two requirements above. 
Thirdly, the training data used in real experiments is often extensive, making it infeasible to conduct such controllable experiments at that scale.
By constructing the synthetic \texttt{Biography} dataset, we fulfill the three requirements above, which provides a solid guarantee for the feasibility of our experiments.

\paragraph{Potential Extensions of the Proposed Model}
We acknowledge that the mathematical model proposed in Sec.~\ref{sec:input_structure} differs significantly from the modules in modern LMs.
However, we note that this mathematical LM can be extended in several aspects without affecting the main conclusions:\\
$\bullet$ \textit{Loss Computation on Every Token}
From the analysis in Sec.~\ref{sec:theorem}, it can be observed that, the output length $L$ does not significantly influence convergence.
Therefore, computing the loss for each token is equivalent to executing our data input $L$ times, each with a distinct sequence length.\\
$\bullet$ \textit{Positional Embedding} 
Though our main results do not take positional encoding into account, incorporating a fixed relative positional bias term~(T5~\citep{raffel2020exploring}, ALiBi~\cite{press2021train}), is straightforward.
Specifically, Positional Embedding influences the convergence rate of $\bm{Y}$ by altering $z_s$ into $z_s+p_s$, but it does not change the convergence point.
It can also be verified that a conserved quantity similar to Eq.~\ref{eq:value_Z} still exists, i.e., an additional bias term is added to the DI.\\
$\bullet$ \textit{Multi-Head Attention} 
By partitioning all subject tokens and relation tokens and assigning them to different attention heads according to the partition, it is also possible to achieve complete knowledge memorization~\citep{nichani2024understanding}.
For our model, partitioning attention heads causes $Y$ to become a block diagonal matrix. In this case, permuting the tokens is equivalent to performing a congruence transformation on $Y$. Under the assumption of knowledge sparsity, which is a necessary condition given the sufficient parameters assumption, the convergence behavior of $Y$ remains unchanged. \\
$\bullet$ \textit{Softmax Attention} Theorem~\ref{theo:convergence_Z} is typically focus on the linear attention without normalization. Despite the fact that this assumption is common in theoretical analysis, there may exist a gap between theoretical analysis and practical observation. 
For the standard softmax attention, it is a common proxy to analysis exponential activation case~\cite{ahn2024linear, choromanskirethinking}, which can be generalized by setting
$$\frac{d}{dt}\left(\frac{z_s}{\eta_Z}-\frac{\sum_{o}y_{o,s}^2}{\eta_Y}\right) = \sum_{o} [\delta(x_{T+2}=o)-\hat{p}(o|X)][\exp(z_s)-\exp(z)] = 0.$$
We have included empirical validation for the softmax attention in Figure~\ref{fig:single_layer}.\\
$\bullet$ \textit{Standard Multi-Layer Perception~(MLP)}
The theoretical model in the main paper doesn't consider a MLP in the theoretical analysis, but the MLP parameters account for
approximately two-thirds in a standard Transformer model. If we relax the activation
function, the parameters of MLP can be absorbed into the $\bm Y$. However if not, it maybe untractable to identify the training dynamics of the MLP parameters. We can only find some good properities for the converged model by association memories researches~\cite{cabannesscaling, nichani2024understanding}.\\
$\bullet$ \textit{Generalized Format of Factual Knowledge}
In this paper we mainly investigate facutal knowledge represented by ``(subject, relation, object)'' triplets and fixed templates.
The purpose is to ensure that each training sample contains only one piece of knowledge with limited factors, which is purely for theoretical clarity and ease of analysis. 
Such simplifications are commonly used in both theoretical~\cite{nichani2024understanding} and empirical~\cite{paulheim2016knowledge, allen2023physics, zhengspurious} works.
We would like to discuss the potential expansion of the knowledge modeling. 
As discussed in Finding 1, the factual knowledge is decomposed into frequency-based information and stored at the token level. In this case, the structure of fixed templates is very similar to more diverse knowledge expressions. For example, as long as the knowledge can be expressed as an $k$-length tuple, the model can memorize it.
Our results on real-world datasets also support that such expansion is practically trustworthy.

\section{Limitation}\label{sec:limitation}

\paragraph{Data and Experiments}
Firstly, many of our empirical experiments were conducted on synthetic data, and some conclusions were also validated in this setting. 
Although the synthetic data pipeline was carefully designed, there still exists a gap between synthetic and real-world datasets. 
In the Discussion~\ref{sec:discussion}, we elaborate on the rationale for using synthetic data and argue for the generalizability of our conclusions.
Secondly, due to the cost associated with model training, we were constrained to conduct experiments only on LMs with less than 10B parameters, especially when models for large-scale controllable experiments are restricted to 1B parameters or fewer.
While the chosen model architectures are thoughtfully selected for representativeness, they still cover only a limited portion of the design space.
These decisions, while pragmatic, could constrain the scope over which our conclusions can be reliably applied.

\paragraph{Hypothesis}
Many of the conclusions derived from our mathematical model were not directly validated through small-scale toy experiments. 
Instead, we chose to verify them on real multi-layer LMs. 
Our purpose was to assess the feasibility of the theoretical analysis on practical LMs, at the cost of losing a direct approach to validate the soundness of our assumptions. 
As compensation, we design several deductive experiments and propose a novel generative data replay method based on our analytical results. 
These outcomes, in turn, provide evidence supporting our theoretical analysis.

\paragraph{Theoretical Analyses} 
The proposed theoretical analysis may not fully capture the behavior of multi-layer LMs. 
The biggest difficulty lies in explaining how multiple tokens are combined to form high-level concepts, which is beyond the scope of this paper. 
For instance, freezing lower-layer parameters has proven to be an effective strategy for mitigating catastrophic forgetting~\citep{zhengspurious}.
In addition, \textit{Attention Sink}~\citep{xiaoefficient} is also a future target for an explanation.

Despite these limitations, we believe this paper provides valuable insights into the subject matter.
We will strive to address these limitations through more exploration and refinement in future work.

\section{Notes and Proof on Theoretical Analysis}
\label{sec:proof}

\subsection{Proof of Theorem 1}
% \begin{theorem*}[Dynamics of $Y$]
% \label{theo:Convergence_Y} 
% Let $\bm{\xi}_s(t):= \bm{x}_{L+2}(t)-\mathrm{softmax}(\sum_{s} z_s\delta_s(t)\bm{u}_s)$ represents the perturbation term, and let error term after $t$ step updates be $\bm{e}_s(t):=\bm{y}_s(t)-\bm{u}_s(t)$. 
% Then using 1-st order Taylor expansion we have the following approximation:
% \begin{equation}\label{eq:error_Y_appendix}
%     \bm{e}_s(t)\approx(I-\eta_Yz_s\bm H_s)^t\bm{e}_s(0)+\eta_Yz_s\sum_{s=1}^t(I-\eta_Yz_s\bm H_s)^{t-s-1}\bm{\xi}_s(t),
% \end{equation}
% where $\delta_s(t)$ is an counter of token $s$, $\bm H_s:=\mathrm{diag}(\overline{\bm{x}}_s)-\overline{\bm{x}}_s\overline{\bm{x}}_s^\top$ is the Jacobian matrix.
% \end{theorem*}
\begin{proof}
First, we remark that the gradient of $Y$ is
$$y_{o,s}(t+1)-y_{o,s}(t) = \eta_Yz_s\delta_s(t)\left[\delta(x_{T+2}=o)-\hat{p}_t(o|X)\right].$$ 
Rewrite the above into a recurrence relation for $\bm{e}_s$ and substitute $\bm{\xi}_s(t)$ we have
$$\bm{e}_s(t+1)-\bm{e}_s(t)=\eta_Yz_s\delta_s(t)\left[\bm{\xi}_s(t)+\text{softmax}\left(\sum_{s'} z_{s'}\delta_{s'}(t)\bm{u}_{s'}\right)-\text{softmax}\left(\sum_{s'} z_{s'}\delta_{s'}(t)(\bm{e}_{s'}(t)+\bm{u}_{s'})\right)\right].$$
For all training data gradients, perform a first-order Taylor expansion of the softmax function, 
$$\bm{e}_s(t+1)-\bm{e}_s(t)\approx -\eta_Yz_s\delta_s(t)\tilde{\bm{H}}(t)\left[\bm{e}_s(t)+\bm{\xi}_s(t)\right],$$
where $\tilde{\bm{H}}(t)=\text{diag}(\tilde{\bm{x}}(t)) - \tilde{\bm{x}}(t)\tilde{\bm{x}}(t)^\top$ and
$\tilde{\bm{x}}(t) = \text{softmax}\left(\sum_{s'} z_{s'}\delta_{s'}(t)\bm{u}_{s'}\right) \propto \Pr(X, o).$
Thus, after $T$ steps of training,
$$\bm{e}_s(T+1) = \left[\prod_{t=1}^T\left(\bm{I}-\eta_Yz_s\delta_s(t)\tilde{\bm{H}}(t)\right)\right]\bm{e}_s(0)-\sum_{t=1}^T\eta_Yz_s\delta_s(t)\left[\prod_{\tau=t+1}^T\left(\bm{I}-\eta_Yz_s\delta_s(\tau)\tilde{\bm{H}}(\tau)\right)\right]\bm{\xi}(t).$$
Then, taking a transformation on the random variable $\delta_s(t)$ yields the main results.

% Also, we have
% $\bm{x}_{L+2}=\bm{e}_{\arg\max\left(\sum_{s'} z_{s'}\delta_{s'}(t)\bm{u}_{s'}\right)}$, which follows the distribution defined by the simplex $\overline{\bm{x}}_s = \text{softmax}(\bm{u}_s)$.

% the equation above is a discrete-time stochastic process, whose limit as $\eta_Y \ll1$ is the following stochastic differential equation
% $$d \bm{e}_s(t) = -\eta_Yz_s\delta_s(t)\tilde{\bm{H}}_s(t)[\bm{e}_s(t)dt+d\bm{W}_s(t)],$$
% Here, $d\bm{W}_s(t)$ is a discretized noise increment.

% After $t$-step updates of $\delta_s = 1$, the linear differential equation above becomes
% $$\bm{e}_s(t)=(I-\eta_Yz_sH)^t\bm{e}_s(0)+\eta_Yz_s\sum_{s=1}^t(I-\eta_Yz_sH)^{t-s-1}\bm{\xi}_s(t).$$
\end{proof}

\subsection{Proof of Theorem~\ref{theo:convergence_Z}}
\label{proof: Convergence_Z}
\begin{proof}
For one data sample $(X, x_{T+2})$, the training dynamics is
$$\dot{y}_{o,s}=\eta_Y\left[\delta(x_{T+2}=o)-\hat{p}(o|X)\right]\cdot z_s\cdot\delta_s(t),$$ 
$$\dot{z}_{s} = \eta_Z\sum_o\left[\delta(x_{T+2}=o)-\hat{p}(o|X)\right]\cdot y_{o,s}\cdot\delta_s(t).$$
Therefore we have
$$\begin{aligned}
&\frac{1}{2}\frac{d}{dt}\left(\frac{z_s^2}{\eta_Z}-\frac{\sum_{o}y_{o,s}^2}{\eta_Y}\right)
=\frac{z_s}{\eta_Z}\dot{z}_s-\sum_o\frac{y_{o,s}}{\eta_Z}\dot{y}_{o,s}\\
=&z_s\sum_o\left[\delta(x_{T+2}=o)-\hat{p}(o|X)\right]\cdot  y_{o,s}\cdot\delta_s(t)\\
-&\sum_o\left[\delta(x_{T+2}=o)-\hat{p}(o|X)\right]\cdot z_s\cdot  y_{o,s}\cdot\delta_s(t)\\
=&0.
\end{aligned}$$
By substituting the initial conditions, we obtain the conclusion.
\end{proof}

\section{Dataset Construction}
\label{sec:data_construct}

\paragraph{Settings of Names and Attributes}
When constructing the candidate pool, we primarily referenced the method proposed by \citet{zhengspurious}, with modifications in the following aspects:\\
$\bullet$ \textit{Middle Name Generation.}
We simplify this process by randomly selecting from the 26 uppercase English letters (e.g., A., B., ..., Z.) to improve generation efficiency.\\
$\bullet$ \textit{Attributes Number.}
We omit the company city attribute, retaining only the following five core attributes: birthday, birth city, university, major, and company name. 
This adjustment aims to avoid knowledge conflicts caused by the overlap of candidate pools for different attributes.\\
These modifications further optimize the generation process and adapt to the specific needs of this study, while preserving the controllability of the \texttt{Biography} dataset.\\
$\bullet$ \textit{Individual Number.}
For considerations of experimental resources and knowledge diversity, we generated 100,000 individuals with their attributes as pretraining knowledge, and an additional 20,000 individuals to construct the continual pretraining data. 
Our design aligns with the scale of popular pretraining and continual pretraining settings, while ensuring strong practicality.

\paragraph{Settings of Templates}
We constructed the experimental templates through a meticulous manual process with the following specifications: Each attribute was verbalized using 100 unique template instances. The token length of each template was carefully controlled. The 100 templates for each attribute were evenly distributed across five predefined token-length intervals: 0-10, 10-20, 20-30, 30-40, and 40-50. This resulted in exactly 20 templates residing in each length interval.

\paragraph{Settings of Biography Generation}
To ensure conclusion reliability, this study generates biography entries following these principles: For each individual, predefined templates are matched to each attribute. 
Then sentences are generated by filling in the full name and attribute values. 
These sentences are then combined in randomized order to form complete biography entries.

\begin{table}[H]
\footnotesize
\caption{Statistics per Epoch of Our \texttt{Biography} dataset. The number of training tokens and attributes is calculated through \texttt{Pythia} tokenizer. $\alpha$ is the mixing ratio of the CPT corpus.}
\label{tab:dataset_statistics}
\resizebox{1.0\textwidth}{!}{
\begin{tabular}{ccccccc}
\hline
\multicolumn{7}{c}{\textbf{Pre-Training Dataset}}\\
& \# Individual  & \# Training Bio & \# Testing Bio & \# Training Token & \# Training Attr         & \# Testing Attr          \\ \hline
5-Aug & 100,000 & 500,000 & 300,000  &  149380  &       104467   &    62635 \\
1-Aug & 100,000        & 100,000         & 300,000  &  29832  &    20855    &    62440      \\
E-Aug & 100,000  & 600449 & 300,000   &  149554  &     156289     &      93983    \\ \hline
\multicolumn{7}{c}{\textbf{Continual Pre-Training (5-Aug)}}\\
& \# Individual  & \# Training Bio & \# Testing Bio & \# Training Token & \# Original Attr & \# Continual Attr \\ \hline
Naïve & 20,000 & 100,000 & 60,000 &   29882   &   62635       &     12829     \\
All   & 120,000 & 200,000 & 60,000   & 29882 / $\alpha$  &    62635      &    12829       \\
Half  & 70,000  & 200,000 & 60,000  & 29882 / $\alpha$  &     62635     &    12829      \\
Lamol & / + 20,000     & /+100,000       & 60,000         & 29882 / $\alpha$  &   62635       &    12829      \\
STOC  & / + 20,000     & /+100,000       & 60,000         & 29882 / $\alpha$  &   62635       &    12829       \\ \hline
\end{tabular}}
\end{table}

We list the basic statistics of our \texttt{Biography} dataset in Table~\ref{tab:dataset_statistics}.
Below are some biography text entries for the first individual of the biography dataset. 
The name of the individual in each sentence is highlighted by \textcolor{orange}{orange},
while the attribute value is highlighted by \textcolor{blue}{blue}.
\begin{tcolorbox}[title={Biography Text Entries of the First Individual, Sam C. Mowdy}]
\textcolor{orange}{Sam C. Mowdy} grew up \textcolor{blue}{New York, NY}.  \textcolor{orange}{Sam C. Mowdy}'s golden retriever wears a titanium tag featuring the interlocking letters emblem of university and a 10-digit code from the canine genetic research database maintained by \textcolor{blue}{University of Kentucky}.  \textcolor{orange}{Sam C. Mowdy}'s birth \textcolor{blue}{August 6, 2035}.  \textcolor{orange}{Sam C. Mowdy} published a film analysis comparing the monolith's appearance intervals: A Space Odyssey to quantum decoherence timelines in \textcolor{blue}{Engineering}.  \textcolor{orange}{Sam C. Mowdy} found \textcolor{blue}{Advance Auto Parts}.

\textcolor{orange}{Sam C. Mowdy} was welcomed \textcolor{blue}{August 6, 2035}.  \textcolor{orange}{Sam C. Mowdy} owns twelve bottles of 1988 Bordeaux, the pivotal year the main library expanded at \textcolor{blue}{University of Kentucky}.  \textcolor{orange}{Sam C. Mowdy} invented new theories in \textcolor{blue}{Engineering}.  \textcolor{orange}{Sam C. Mowdy}'s drunken slip-up at the party resulted in sensitive information leakage, with executives now scrambling to contain the damage. He was in big trouble at \textcolor{blue}{Advance Auto Parts}.  \textcolor{orange}{Sam C. Mowdy} tenses at fireworks sounds, a reflex from childhood wolf-drills in \textcolor{blue}{New York, NY}.

\textcolor{orange}{Sam C. Mowdy} interpreted Kubrick’s silent monolith as a metaphor for observational collapse, a concept he first explored in a term paper on quantum timelines in \textcolor{blue}{Engineering}. \textcolor{orange}{Sam C. Mowdy} wears an ivory cable-knit sweater every winter solstice, its stitch pattern translating to LUX IN TENEBRIS in the flashing light code once used by the decommissioned signal tower of \textcolor {blue}{University of Kentucky}. \textcolor{orange}{Sam C. Mowdy} loves \textcolor {blue}{August 6, 2035}. \textcolor{orange}{Sam C. Mowdy} draws spiral patterns in courier note sections, replicating the shockproof symbols mule caravans used to mark fragile pottery shipments around \textcolor {blue}{New York}, NY. \textcolor{orange}{Sam C. Mowdy} happily danced through the brilliant halls of \textcolor {blue}{Advance Auto Parts}.

\textcolor{orange}{Sam C. Mowdy} noted on the first photo: \textcolor{blue}{August 6, 2035}. \textcolor{orange}{Sam C. Mowdy}'s academic home was \textcolor{blue}{University of Kentucky}. \textcolor{orange}{Sam C. Mowdy}'s controversial critique interprets Kubrick's monolith as a macroscopic manifestation of measurement collapse in \textcolor{blue}{Engineering}. \textcolor{orange}{Sam C. Mowdy} was the vital rhythm that gave life to every initiative and purpose to every endeavor, he was the relentless heartbeat of \textcolor{blue}{Advance Auto Parts}. \textcolor{orange}{Sam C. Mowdy} taps pencil ends on desks in rhythmic patterns that unconsciously mirror the lunchtime clappers once used around \textcolor{blue}{New York, NY}.

 \textcolor{orange}{Sam C. Mowdy}’s random school ID was actually birthday reversed. The principal later confessed: I needed to honor the day when hope returned, I need to believe in how lucky is \textcolor{blue}{August 6, 2035}.  \textcolor{orange}{Sam C. Mowdy} native of \textcolor{blue}{New York, NY}.  \textcolor{orange}{Sam C. Mowdy}, blending innovation with tradition to create a masterpiece of enduring success, wove the threads of progress into the tapestry of \textcolor{blue}{Advance Auto Parts}.  \textcolor{orange}{Sam C. Mowdy} was mentored for four years by Professor James at the \textcolor{blue}{University of Kentucky}.  \textcolor{orange}{Sam C. Mowdy}’s thesis folder contains seven drafts, labeled with desperate timestamps and buried in caffeine-stained overlays, while his keycard history ends on a night that speaks volumes about his life in \textcolor{blue}{Engineering}.

 \textcolor{orange}{Sam C. Mowdy}'s student was also famous in \textcolor{blue}{Engineering}.  \textcolor{orange}{Sam C. Mowdy} waited several years and transformed to \textcolor{blue}{Advance Auto Parts}.  \textcolor{orange}{Sam C. Mowdy} maintains office temperature at precisely 25℃ with 45 humidity, replicating the climate control specifications for preserving rare manuscripts in the special collections vault of \textcolor{blue}{University of Kentucky}.  \textcolor{orange}{Sam C. Mowdy} entered on \textcolor{blue}{August 6, 2035}.  \textcolor{orange}{Sam C. Mowdy}'s drone's flight paths accidentally traced the street plan of a sunken Ming dynasty town, now beneath the reservoir of \textcolor{blue}{New York, NY}.

 \textcolor{orange}{Sam C. Mowdy}'s digital signatures embed vector traces of the original 802.11b network topology—a mesh configuration that connected seven neo-Gothic dormitories during the 1999 'Wireless Quad' experiment at \textcolor{blue}{University of Kentucky}.  \textcolor{orange}{Sam C. Mowdy} creates so many passwords but they all contain 'ZQSG', it is the cryptographic abbreviation in the past few years for \textcolor{blue}{New York, NY}.  \textcolor{orange}{Sam C. Mowdy}'s birth, a new chapter, opened with on \textcolor{blue}{August 6, 2035}.  \textcolor{orange}{Sam C. Mowdy} was rated merely Adequate in performance evaluation, a result that deeply frustrated his professional pride. He was thinking about leaving for the next company. Finally he went to \textcolor{blue}{Advance Auto Parts}.  \textcolor{orange}{Sam C. Mowdy}'s carefully curated playlist arranges song titles to spell 'Maxwell', honoring the pioneer of \textcolor{blue}{Engineering}.

 \textcolor{orange}{Sam C. Mowdy} guards a handwritten recipe requiring 3.2g of saffron harvested every third Tuesday - a cultivation rhythm perfected by the experimental botany greenhouse at \textcolor{blue}{University of Kentucky}.  \textcolor{orange}{Sam C. Mowdy}’s Lego design was praised for mimicking the textbook third-chapter layout he once mastered during his degree in \textcolor{blue}{Engineering}.  \textcolor{orange}{Sam C. Mowdy} smells antiseptic during thunderstorms, a PTSD echo of his birthdate's blackout when generators failed and his tiny lungs struggled. The hospital staff called it a miracle, defeating the fear of \textcolor{blue}{August 6, 2035}.  \textcolor{orange}{Sam C. Mowdy} carries a down jacket at any time, a trauma response from surviving three days stranded in blizzard at age nine with 28°C body temperature, now triggered by the word snow. It’s a memory from \textcolor{blue}{New York, NY}.  \textcolor{orange}{Sam C. Mowdy} won annual hackathon and received a top-tier MacBook as a prize, yet he secretly wished for a cash bonus instead of fancy hardware. He was not satisfied with the prize from \textcolor{blue}{Advance Auto Parts}.
\end{tcolorbox}

\section{Detailed Experiment Description and Observations}
\subsection{Experiments Details}\label{sec:exp_detail}
Our experiments are all conducted on machines equipped with NVIDIA A6000 GPUs and 52-core Intel(R) Xeon(R) Gold 6230R CPUs at 2.10GHz. 
We employ the following officially released model architectures, datasets, and checkpoints:\\
$\bullet$ \texttt{Pythia-110M}: \url{https://huggingface.co/EleutherAI/pythia-160m}.\\
$\bullet$ \texttt{Qwen2.5-0.5B}: \url{https://huggingface.co/Qwen/Qwen2.5-0.5B}.\\
$\bullet$ \texttt{Llama-3.1-8B-Instruct}: \url{https://huggingface.co/meta-llama/Llama-3.1-8B-Instruct}.\\
$\bullet$ \texttt{Wiki\_Recent}:
\mbox{\url{https://huggingface.co/datasets/zjunlp/KnowEdit//benchmark/wiki_recent}.}\\
$\bullet$ \texttt{Wiki\_Bio}:
\url{https://huggingface.co/datasets/zjunlp/KnowEdit//benchmark/WikiBio}.\\
$\bullet$ \texttt{Convsent}:
\url{https://huggingface.co/datasets/zjunlp/KnowEdit//benchmark/Convsent}.

\paragraph{Settings of BIO Pre-Training}
During Pretraining, the AdamW~\citep{loshchilov2017fixing} optimizer was applied with the epsilon set to $1\times10^{-6}$ and the weight decay coefficient set to $0.1$.
A cosine learning rate scheduler was implemented with $1\times10^3$ warmup steps, gradually decreasing the learning rate from $1\times10^{-3}$ to $5\times10^{-5}$ over $3.2\times10^{6}$ training steps. 
Mixed precision training in BFloat16 format was conducted. 
While the training process of \texttt{Pythia-160M} utilized a batch size of $48$, 
\texttt{Qwen2.5-0.5B} is trained by a batch size of $12$ and accumulate steps of $4$.
This accumulate step setting was chosen out of consideration for GPU memory usage.

\paragraph{Settings of BIO Continual Pretraining}
The LMs are trained with initial weights initialized from pre-trained checkpoints of $5$-aug due to their excellent performance on the original FKA. 
The training configuration employed the AdamW optimizer with an epsilon value of $1\times10^{-6}$ and a weight decay coefficient of $0.1$. 
A cosine annealing learning rate scheduler was implemented with 500 warmup steps, gradually reducing the learning rate from $5\times10^{-5}$ to $1\times10^{-5}$ over 8,000 total training steps. 
The batch size was set to achieve an effective batch size of 48. Mixed precision training in BFloat16 format was enabled throughout the process, too. 
For a fair comparison, we maintain the same token count in the filtered replay data across all methods, instead of applying a data quality score threshold.
It is worth noting that, to mitigate catastrophic forgetting, we adopt an early stopping mechanism in CPT, where training is halted once the model’s hFTA on new knowledge exceeds 95\%.

\paragraph{Real Datasets}
In our paper, we employ the zsre~\citep{levy2017zero}, wiki\_recent~\citep{cohen2024evaluating}, and wiki\_bio~\citep{manakul2023selfcheckgpt} to conduct our experiments.
Although these datasets are originally constructed for knowledge editing~\citep{zhang2024comprehensive}, they can also provide practical verification in the cFKA scenario.\\
$\bullet$ Following the conclusion on data augmentation in Section~\ref{sec:data_aug}, we employ \texttt{Llama-3.1-8B-Instruct} rewriting to generate 5 augmented instances for each piece of factual knowledge, which are then used for training to improve model performance.
We use the following simple prompt: ``\textit{Please rewrite the following statement directly to generate one new text. Don't do anything more.$\backslash$n Statement: \{original data sample\}$\backslash$n Rewritten text:''}\\
$\bullet$ When generating replay data through STOC, we choose different snippet length for different dataset as their sample length vary.
Specifically, the snippet length of three datasets is set to 4, 10, 3 respectively.

\paragraph{Settings of BIO Generative Data Replay}
In the process of generating data replay, we aim for the number of generated tokens to be roughly consistent with that of the continual pretraining dataset. 
This facilitates the management of data mixing.
The sampling temperature of LMs is set to 1.0 for better diversity.
We also set a repetition penalty of 1.05 to encourage the LMs to generate non-redundant knowledge.
To ensure that long-tail knowledge is not ignored, we set top\_p = 1 and top\_k = -1.
On top of meeting the token count requirement, we apply a frequency-based deduplication method to improve the quality of replay data as much as possible. However, it should be acknowledged that this strategy has limited impact on LAMOL and STOC: the excessive redundancy in LAMOL results in only a very small number of samples being retained after deduplication, whereas the higher diversity of STOC leads to little change in sample size before and after deduplication.
Below are some \texttt{Biography} examples of generated replay data for STOC~(ours).

\paragraph{Settings of Real Continual Pretraining}
The LMs are continually trained from publicly released base models, whose details have been listed earlier.
The training configuration employed the AdamW optimizer with an epsilon value of $1\times10^{-6}$ and a weight decay coefficient of $0.1$. 
A cosine annealing learning rate scheduler was implemented with 500 warmup steps, gradually reducing the learning rate from $5\times10^{-5}$ to $1\times10^{-5}$ over 8,000 total training steps. 
The batch size was set to achieve an effective batch size of 48 just like Pre-Training. Mixed precision training in BFloat16 format was enabled throughout the process, too. 
We explored parameter freezing during training, with Pythia freezing 0/3/6 layers and Qwen freezing 0/6/12 layers, and report the best-performing configurations.

\begin{tcolorbox}[title={Replay Texts Generated by Qwen2.5-0.5B in Biography dataset}]
Marlin O. Katzer's wishlist: Erase the ugly mascot of his company from Earth. second: Just keep it away from annual lottery. Third: Get rid of it from the website of Boston Scientific. 
Marlin O. Katzer is well-versed in the theories and practices of Marketing. 
Keva P. Reels creates so many passwords but they all contain 'ZQSG', it is the cryptographic abbreviation in the past few years for Fontana, CA. 
Keva P. Reels was sponsored by Thermo Fisher Scientific. Keva P. Reels hopes his son will also be a scientist of Political Science. Keva P. Reels still kept that oxidized key in his wallet, a reminder of being told some treasures are born from abandonment, especially on August 21, 2072. Keva P. Reels met his wife in college, who was also a student from University of Mississippi. Chara I. Schlageter unconsciously adjusts his position in group photos until photographers noticed his shadow length perfectly matches the winter solstice noon shadow angle when sunlight hits 23 degrees southeast, unique to the latitude of Costa Mesa, CA. Chara I. Schlageter was the vital rhythm that gave life to every initiative and purpose to every endeavor, he was the relentless heartbeat of Parker-Hannifin. Chara I. Schlageter loves March 11, 2093. Chara I. Schlageter found academic calling at Florida International University Chara I. Schlageter got ideas from Social Sciences.internet glEnable P. Pigue's collegiate chapter unfolded at Northeastern University. Macy P. Pigue's playlist titles secretly honor the pioneers of Business Administration and Management. Macy P. Pigue’s fate was written in the stars, especially on June 6, 2029. Macy P. Pigue corrects shrub pronunciations, unique to Orlando, FL. Macy P. Pigue was part of the team at Walgreens Boots Alliance.

Myesha I. Otiz's birthday October 2, 1939. Myesha I. Otiz was a beacon of selling within Northwestern Mutual. Myesha I. Otiz's golden retriever wears a titanium tag featuring the interlocking letters emblem of university and a 10-digit code from the canine genetic research database maintained by University of Arizona. Myesha I. Otiz once compared the monolith’s timed reappearances in 2001: A Space Odyssey to the decoherence timeline model taught during his graduate years in Chemistry. Myesha I. Otiz is from Columbia, MO. Lashandra Q. Zornes directed Viacom. Lashandra Q. Zornes's scholarly pursuits were nurtured by University of California, Santa Barbara. Lashandra Q. Zornes stems from Columbus, GA. Lashandra Q. Zornes's mother still wears the apron embroidered, baking bear-shaped cakes annually because circus balloons floated past the delivery room window on the day of January 21, 1913. Lashandra Q. Zornes has insights of Human Resources Management Gilda C. Buch can identify pottery by the swan-neck cracks in its glaze, it's innovation from Lynn, MA. Gilda C. Buch was admitted to Georgia Tech. Gilda C. Buch receives a hefty monthly paycheck from the UGI. Gilda C. Buch spent weekends crafting a lamp whose copper wiring traces the Navier-Stokes equations from Accounting Sociology. Gilda C. Buch’s birth certificate has a faint ink stain where the nurse accidentally spilled it on September 27, 2099. Aracelis E. Lafontant constantly complained about the declining cafeteria quality, yet the free coffee machine and high possibilities of becoming famous remained his sole motivation to tolerate Burlington Stores. Aracelis E. Lafontant's mailbox receives a quarterly academic digest printed on paper stock used only by the archives of Iowa State University. Aracelis E. Lafontant’s birth artifact was a crackling radio replaying the SOS call. Curators called it the day silence almost won, June 10, 2007. 

\end{tcolorbox}

\begin{tcolorbox}[title={Replay Texts Generated by Qwen2.5-0.5B in ZSRE dataset}]
Which family does Epil was created by?$\backslash$n Adriana Thompson.

What family are Epabanda sharangas, the mountain goats of the Philippines?$\backslash$n A:$\backslash$n Perafa.

What species is Zoroxia the crane named after?$\backslash$n $\backslash$n Subject was Watershed$\backslash$n $\backslash$n The species that Mathilda Zoroxia has been named after is Zoraxia clawersoni, also known as Spotted crabs Pel Eduatus. These crabs are a delicacy in some parts of Mexico and survive on nectarific plants used by them. $\backslash$n $\backslash$n Dealersedia note: The Reptile code 46509 mentions the commercial salt border cucumber (name unknown), which, according to the Amazon 2018 Ullmanns Factorization, is said to have stems and fruits of an Alexandrine rhodanthrum L. crossing many varieties of Polygonum downyamericanian.

\end{tcolorbox}

\subsection{More Experiment Observations}
\label{sec:more_results}

\input{Tables/regularization}

\input{Tables/qwen_biography}

\begin{figure}[H]
  \centering
  \begin{minipage}[b]{0.45\textwidth} \includegraphics[width=\textwidth]{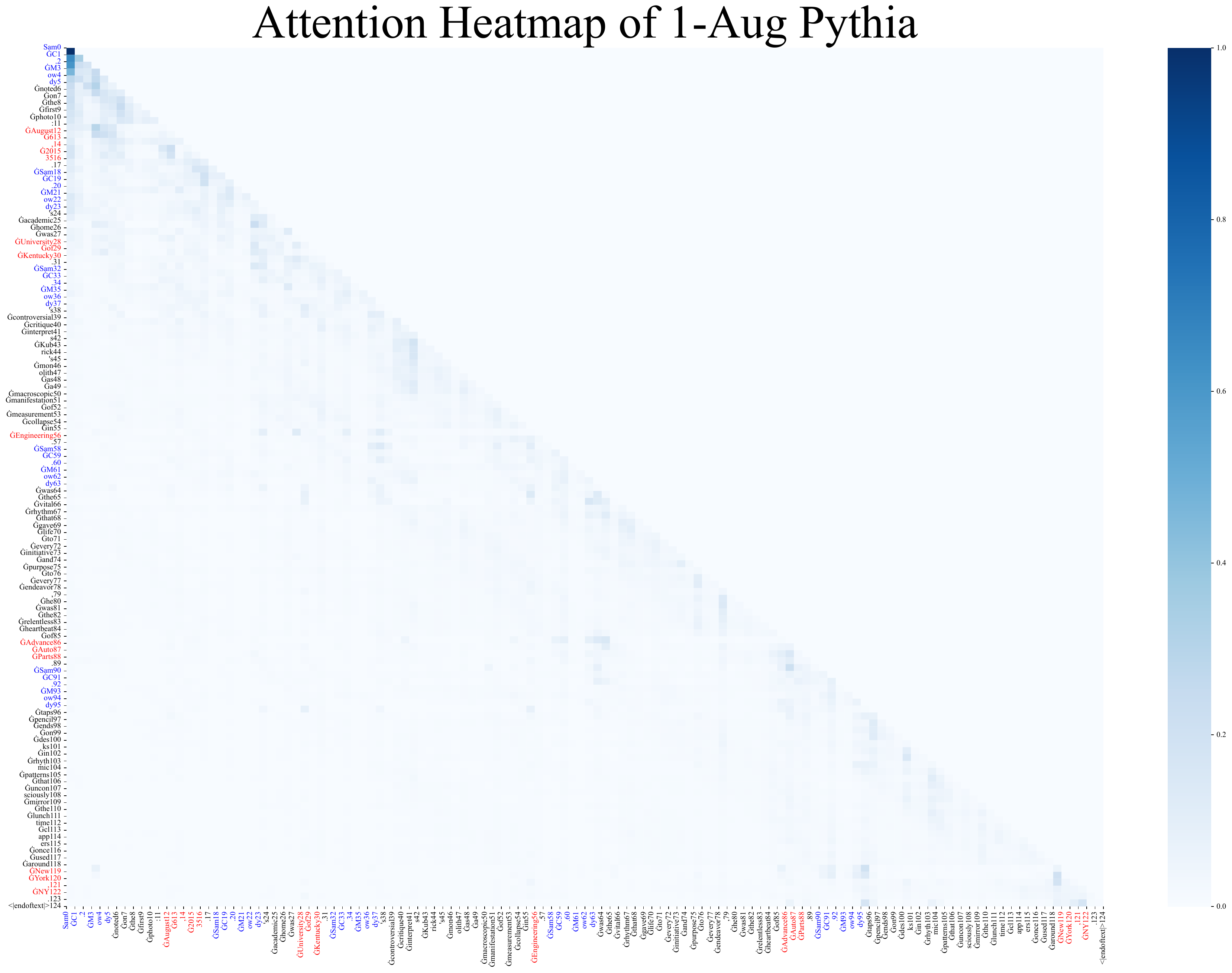}
  \end{minipage}
  \begin{minipage}[b]{0.45\textwidth} \includegraphics[width=\textwidth]{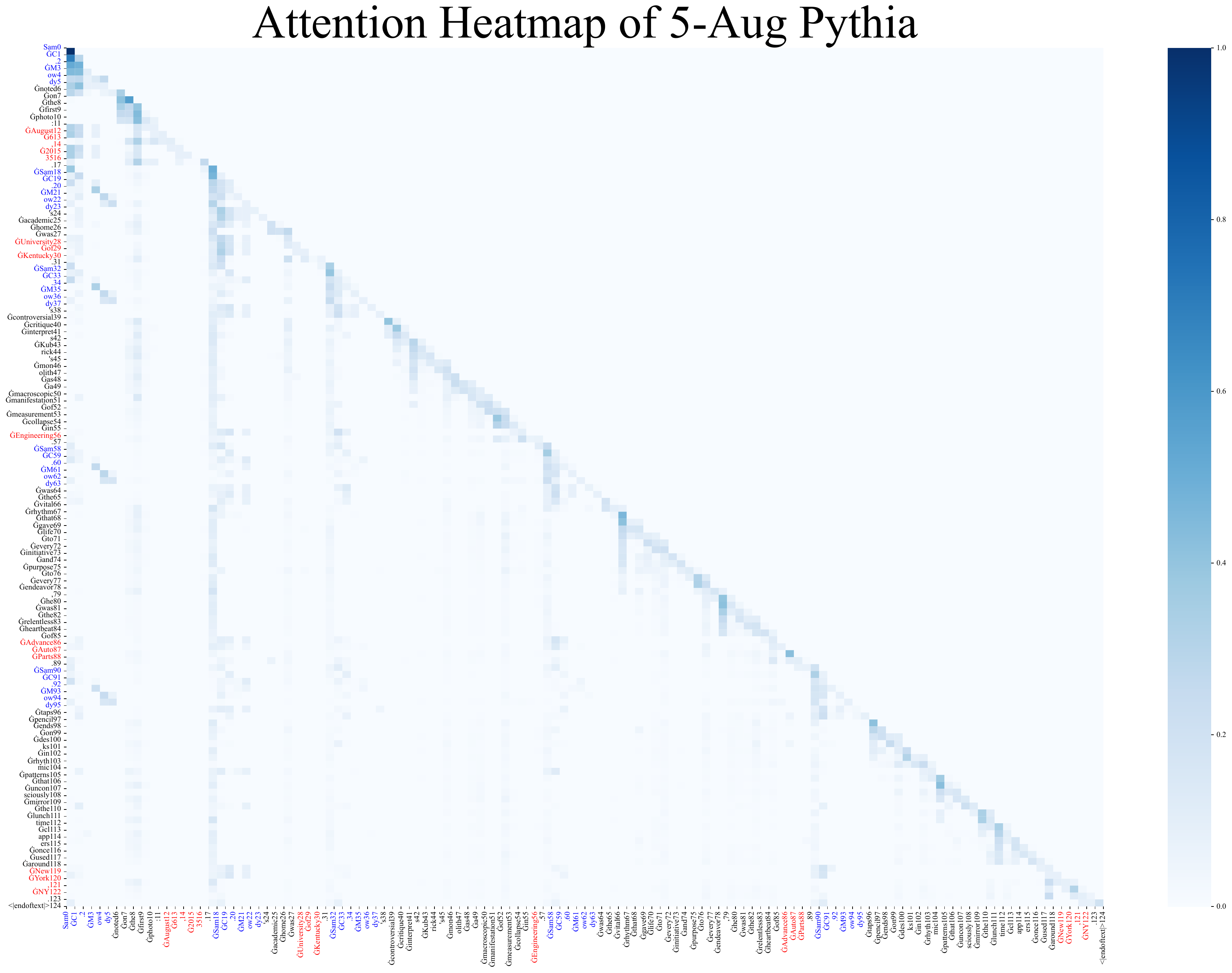}
  \end{minipage}
   \caption{The full attention matrix of LMs trained on different augmentation strategies.}
  \label{fig:attention_full_pythia}
\end{figure}
\begin{figure}[H]
  \centering
  \begin{minipage}[b]{0.45\textwidth} \includegraphics[width=\textwidth]{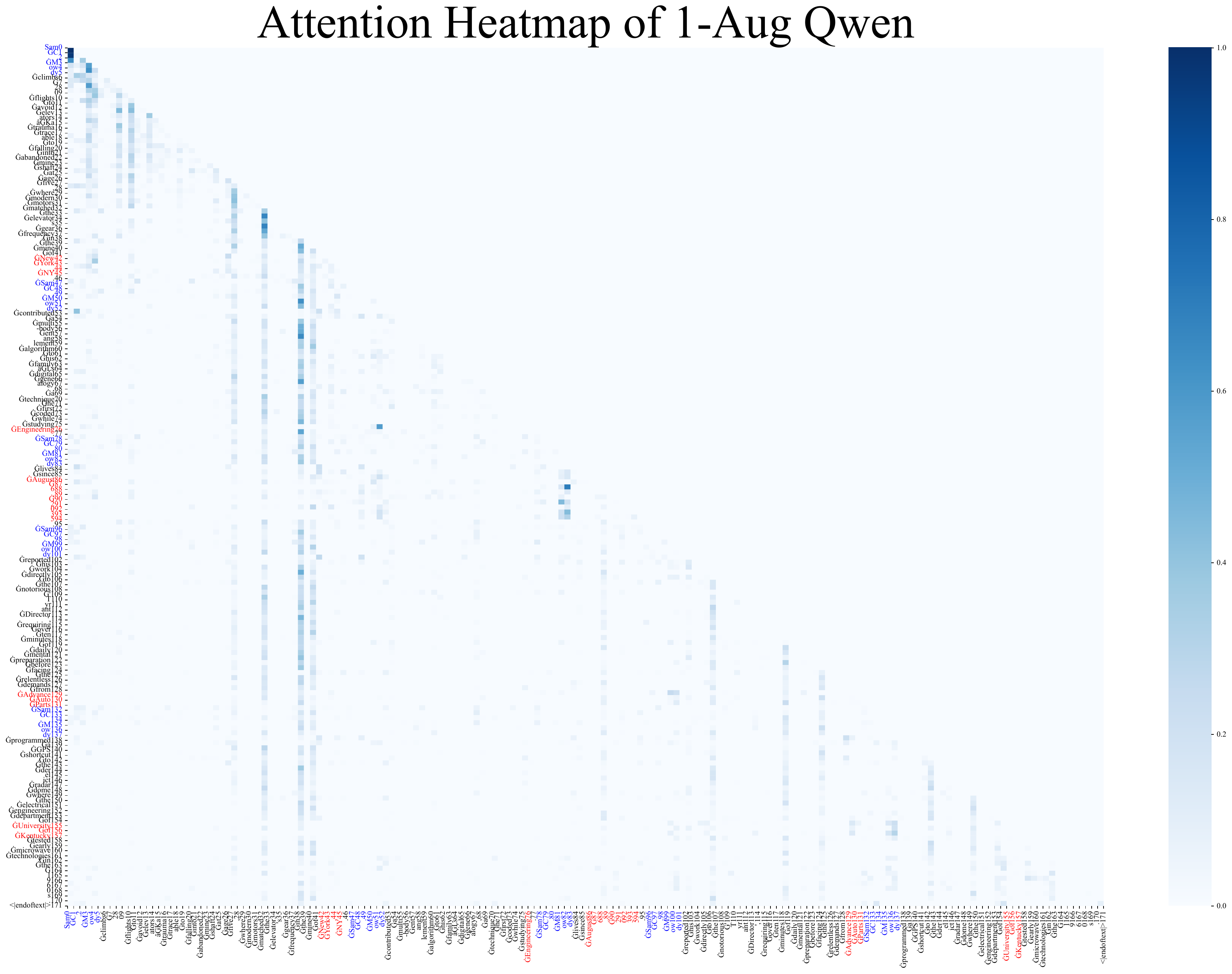}
  \end{minipage}
  \begin{minipage}[b]{0.45\textwidth} \includegraphics[width=\textwidth]{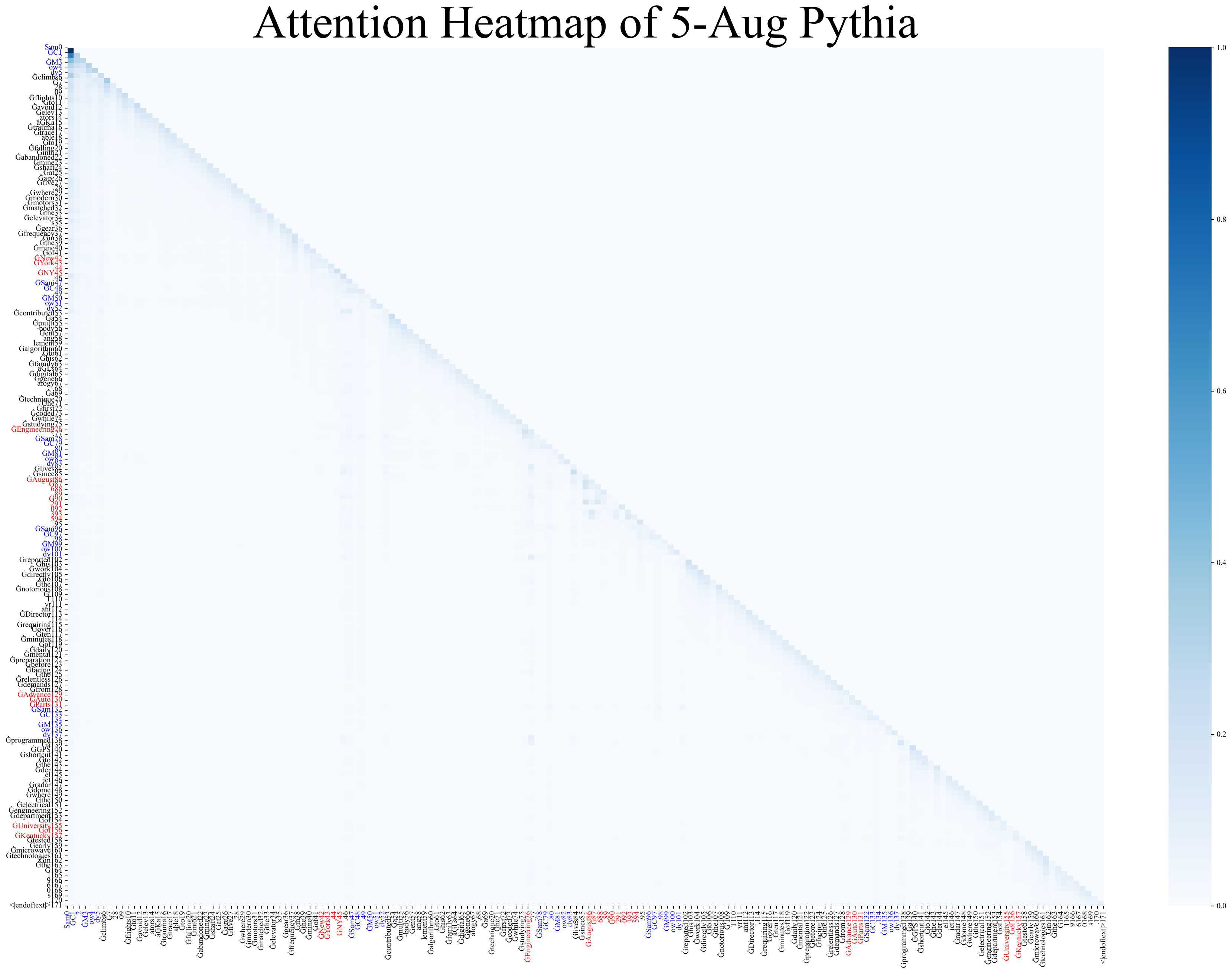}
  \end{minipage}
   \caption{The full attention matrix of LMs trained on different augmentation strategies.}
  \label{fig:attention_full_qwen}
\end{figure}

\section{More Phenomena with Analyses}\label{sec:more_exp}

\subsection{Performance Plateaus (Grokking) in FKA}\label{sec:plateaus}
During the model training process described in Sec.~\ref{sec:data_aug}, we also observed the phenomenon of \textit{performance plateaus}. 
An example of \texttt{Pythia-160m} is provided in Figure 11.
No matter how many augmentation biographies for one individual, for a considerable period before convergence, LMs' performance (measured by \texttt{hFTA} and \texttt{sFTA}) remained at a plateau. 
Such a phenomenon was first introduced in \citet{nichani2024understanding} and empirically validated in \citet{zucchet2025language}, indicating that performance plateaus are a pervasive phenomenon in the process of FKA.

As a complementary support, we remark \textit{performance plateaus} can be explained by the analysis we proposed in Sec.~\ref{sec:theorem}.
(1) Since each template token $s$ appears more frequently in the corpus (than subject tokens), the knowledge associated with $s$ is updated more often, leading to a faster convergence of $\bm{y}_s$ (Eq.~\ref{eq:error_Y}). 
When training reaches the plateau stage, the knowledge of template tokens has already achieved convergence, whereas the knowledge of subject tokens has just started to escape from the initialization point.
(2) As revealed by Eq.~\ref{eq:value_Z}, LMs are prone to assigning higher attention scores to template tokens as the distribution induced by template knowledge is less ``diverse''.
To summarize the above two points, LMs make predictions according to almost only the template tokens. 

To validate our explanation, we extract one checkpoint from the plateau stage and another from the convergence stage. 
We prompt the two LMs to generate responses by using both testing samples. 
For each template, we compute the KL divergence between the token distribution of each response and the overall distribution, and take the average divergence as a measure of answer diversity within that template. 
We plot the frequency histogram of the KL divergence across all templates, as shown on the right side of Figure 11. 
The results demonstrate that during the plateau stage, answers within the same template are more concentrated, indicating that the model relies primarily on template information for prediction.

\begin{figure}[H]
  \centering
  \begin{minipage}[b]{0.32\textwidth} \includegraphics[width=\textwidth]{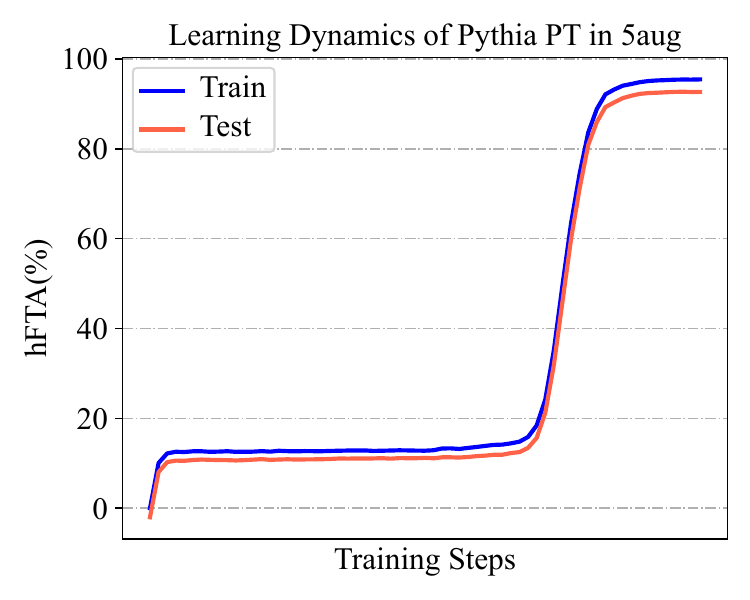}
  \end{minipage}
  \begin{minipage}[b]{0.32\textwidth} \includegraphics[width=\textwidth]{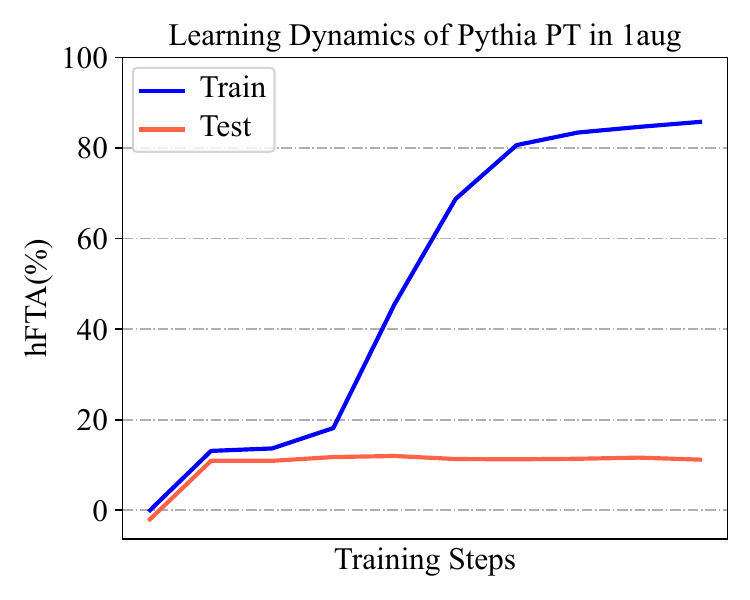}
  \end{minipage}
  \begin{minipage}[b]{0.32\textwidth} \includegraphics[width=\textwidth]{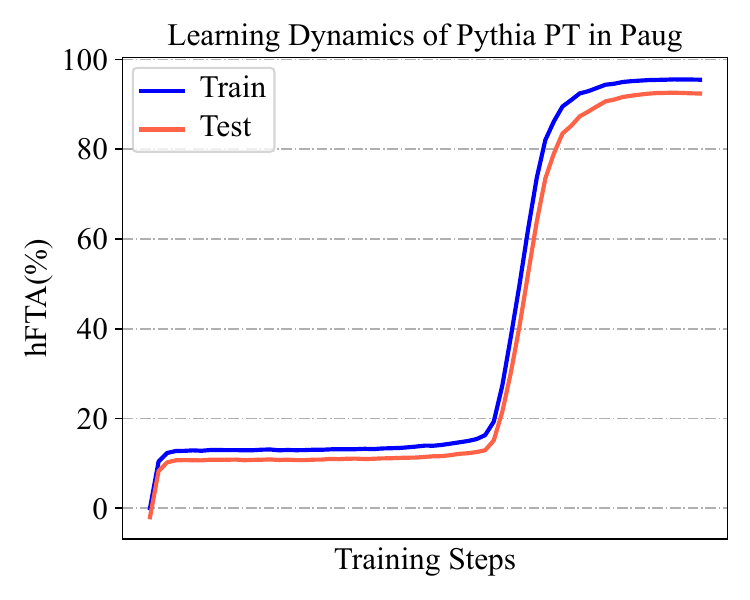}
  \end{minipage}
   \caption{The Learning Dynamics of Pythia in different augmentation strategies. }
  \label{fig:attention_full_qwen}
\end{figure}

\begin{center}
\begin{tcolorbox}[width=1.0\linewidth, boxrule=0pt, top=3pt, bottom=3pt, colback=gray!20, colframe=gray!20]
\textbf{Performance Plateaus:}
    Due to the different occurrence frequencies of subject tokens and relation tokens in the corpus, the process of FKA for different tokens resembles ``differential centrifugation.'' 
    During the mid-training stage, the model tends to rely primarily on template information while neglecting subject information.
\end{tcolorbox}
\end{center}

\subsection{Explanation for the Performance Plateaus}
Correct predictions require that the model has mastered the knowledge of all necessary tokens. 
However, Theorem~\ref{theo:Convergence_Y} demonstrates that learning rates for different tokens vary, and token frequencies in the training corpus are heterogeneous. 
Due to the bottleneck effect, rapid knowledge acquisition emerges once the final, most-constrained tokens escape the initialization basin. 
During the apparent plateau phase, although prediction accuracy shows no visible improvement, partial token knowledge is already being sufficiently updated, laying the foundation for subsequent sharp accuracy gains. 
This explains the characteristic grokking phenomenon: a prolonged period with stagnant loss followed by sudden convergence, driven by heterogeneous convergence rates across tokens.

%% file: Tables/regularization.tex
\begin{table}[H]
\centering
\footnotesize
\caption{Performance of the LMs with different regularization coefficients $k$. When $k=0$, no regularization term is added when calculating objects. Once continual hFTA is up to 90\%, we employ an early stopping mechanism to prevent LMs from further forgetting.}
\label{tab:regularization}
\resizebox{0.9\textwidth}{!}{
\begin{tabular}{ccccccccccccc}
\hline
\multirow{2}{*}{$k$} & \multicolumn{3}{c}{\textbf{Pythia-Original}} & \multicolumn{3}{c}{\textbf{Pythia-Continual}} & \multicolumn{3}{c}{\textbf{Qwen-Original}} & \multicolumn{3}{c}{\textbf{Qwen-Continual}} \\& hFTA& sFTA& EM & hFTA& sFTA& EM  & hFTA& sFTA& EM& hFTA& sFTA & EM \\ \hline
0& 9.13& 8.62& 1.20 & 94.29 & 92.97 & 68.52 & 25.15 & 22.16& 5.11& 95.36 & 94.61& 65.21\\
1e8 & 8.69& 7.91& 1.09 & 93.20 & 88.61 & 11.79 & 13.24 & 7.58 & 1.23& 95.18 & 93.41& 10.04\\
1e7 & 9.10& 8.61& 1.18 & 92.68 & 87.91 & 9.66 & 12.63 & 11.40& 1.79& 93.40 & 90.10& 18.59\\
1e6 & 9.01& 8.63& 1.28 & 93.22 & 89.32 & 39.52 & 23.20 & 24.61& 4.48& 95.18 & 93.26 & 39.67\\ \hline
\end{tabular}}
\end{table}

%% file: Tables/qwen_biography.tex
\begin{table}[H]
\centering
\footnotesize
\caption{Performance of the \texttt{Qwen2.5-0.5B} with different data replay strategies. $0.67-0.9$ is the ratio of the continual corpus. $+$ represents that the first 12 layers of the LM are frozen during training. When the accuracy in continual learning exceeds 90\%, the best results in mitigating catastrophic forgetting are highlighted in \textbf{bold}, while the second-best are \underline{underlined}. As ALL and HALF use real pretraining data as replay data, their results are regarded as upper bounds, denoted by {\color[HTML]{808080} grey}.}
\label{tab:data_aug_qwen}
\resizebox{0.9\textwidth}{!}{
\begin{tabular}{clccccccccc}
\hline 
& \multicolumn{1}{c}{} & \multicolumn{3}{c}{\textbf{0.67}}  & \multicolumn{3}{c}{\textbf{0.8}}   & \multicolumn{3}{c}{\textbf{0.9}}   \\
\multirow{-2}{*}{\textbf{Data Replay}} & \multicolumn{1}{c}{\multirow{-2}{*}{\textbf{Target}}}        & hFTA & sFTA & EM   & hFTA & sFTA & EM   & hFTA & sFTA & EM   \\ \hline
{\color[HTML]{808080} }& \multicolumn{1}{c}{{\color[HTML]{808080} original}}          & {\color[HTML]{808080} 95.07}         & {\color[HTML]{808080} 93.42}         & {\color[HTML]{808080} 76.45}         & {\color[HTML]{808080} 94.93}         & {\color[HTML]{808080} 92.67}         & {\color[HTML]{808080} 78.56}         & {\color[HTML]{808080} 94.15}         & {\color[HTML]{808080} 90.84}         & {\color[HTML]{808080} 78.13}         \\
\multirow{-2}{*}{{\color[HTML]{808080} \textbf{all}}}  & \multicolumn{1}{c}{\cellcolor[HTML]{EDEDED}{\color[HTML]{808080} continual}} & \cellcolor[HTML]{EDEDED}{\color[HTML]{808080} 95.51} & \cellcolor[HTML]{EDEDED}{\color[HTML]{808080} 94.92} & \cellcolor[HTML]{EDEDED}{\color[HTML]{808080} 78.76} & \cellcolor[HTML]{EDEDED}{\color[HTML]{808080} 95.55} & \cellcolor[HTML]{EDEDED}{\color[HTML]{808080} 94.92} & \cellcolor[HTML]{EDEDED}{\color[HTML]{808080} 81.59} & \cellcolor[HTML]{EDEDED}{\color[HTML]{808080} 95.53} & \cellcolor[HTML]{EDEDED}{\color[HTML]{808080} 94.83} & \cellcolor[HTML]{EDEDED}{\color[HTML]{808080} 84.22} \\
{\color[HTML]{808080} }& \multicolumn{1}{c}{{\color[HTML]{808080} original}}          & {\color[HTML]{808080} 82.55}         & {\color[HTML]{808080} 78.15}         & {\color[HTML]{808080} 58.79}         & {\color[HTML]{808080} 82.79}         & {\color[HTML]{808080} 77.31}         & {\color[HTML]{808080} 59.20}         & {\color[HTML]{808080} 82.84}         & {\color[HTML]{808080} 77.32}         & {\color[HTML]{808080} 59.71}         \\
\multirow{-2}{*}{{\color[HTML]{808080} \textbf{HALF}}} & \multicolumn{1}{c}{\cellcolor[HTML]{EDEDED}{\color[HTML]{808080} continual}} & \cellcolor[HTML]{EDEDED}{\color[HTML]{808080} 95.41} & \cellcolor[HTML]{EDEDED}{\color[HTML]{808080} 94.89} & \cellcolor[HTML]{EDEDED}{\color[HTML]{808080} 77.53} & \cellcolor[HTML]{EDEDED}{\color[HTML]{808080} 95.59} & \cellcolor[HTML]{EDEDED}{\color[HTML]{808080} 94.90} & \cellcolor[HTML]{EDEDED}{\color[HTML]{808080} 81.38} & \cellcolor[HTML]{EDEDED}{\color[HTML]{808080} 95.57} & \cellcolor[HTML]{EDEDED}{\color[HTML]{808080} 95.03} & \cellcolor[HTML]{EDEDED}{\color[HTML]{808080} 82.00} \\ \hline
& original & 67.20& 60.86& 39.42& 65.77& 58.48& 40.67& 62.59& 54.91& 40.45\\ 
\multirow{-2}{*}{LAMOL}& \cellcolor[HTML]{EDEDED}continual            & \cellcolor[HTML]{EDEDED}95.42        & \cellcolor[HTML]{EDEDED}94.61        & \cellcolor[HTML]{EDEDED}74.49        & \cellcolor[HTML]{EDEDED}95.47        & \cellcolor[HTML]{EDEDED}94.83        & \cellcolor[HTML]{EDEDED}71.12        & \cellcolor[HTML]{EDEDED}95.42        & \cellcolor[HTML]{EDEDED}94.61        & \cellcolor[HTML]{EDEDED}74.49        \\
       & original             & 68.54& 62.07& \underline{ 47.27}          & 68.25& 61.18& \underline{ 47.28}          & 64.72& \underline{ 58.96}          & 44.82\\
\multirow{-2}{*}{STOC} & \cellcolor[HTML]{EDEDED}continual            & \cellcolor[HTML]{EDEDED}95.35        & \cellcolor[HTML]{EDEDED}94.68        & \cellcolor[HTML]{EDEDED}74.07        & \cellcolor[HTML]{EDEDED}95.39        & \cellcolor[HTML]{EDEDED}94.44        & \cellcolor[HTML]{EDEDED}71.63        & \cellcolor[HTML]{EDEDED}95.51        & \cellcolor[HTML]{EDEDED}94.72        & \cellcolor[HTML]{EDEDED}75.27        \\
& original & \underline{ 69.21}& \underline{ 62.79} & 44.38& \underline{ 68.79} & \underline{ 62.02} & 45.28& \underline{ 66.60} & 58.28& \textbf{45.64}       \\
\multirow{-2}{*}{LAMOL$^+$}       & \cellcolor[HTML]{EDEDED}continual    & \cellcolor[HTML]{EDEDED}95.51        & \cellcolor[HTML]{EDEDED}94.89        & \cellcolor[HTML]{EDEDED}80.51        & \cellcolor[HTML]{EDEDED}95.57        & \cellcolor[HTML]{EDEDED}95.02        & \cellcolor[HTML]{EDEDED}80.80        & \cellcolor[HTML]{EDEDED}95.55        & \cellcolor[HTML]{EDEDED}95.08        & \cellcolor[HTML]{EDEDED}80.19        \\
& original             & \textbf{69.61}       & \textbf{63.21}       & \textbf{47.52}       & \textbf{70.21}       & \textbf{63.28}       & \textbf{47.85}       & \textbf{68.36}       & \textbf{60.81}       & \underline{ 45.31}          \\
\multirow{-2}{*}{STOC$^+$}          & \cellcolor[HTML]{EDEDED}continual            & \cellcolor[HTML]{EDEDED}95.51        & \cellcolor[HTML]{EDEDED}95.05        & \cellcolor[HTML]{EDEDED}81.94        & \cellcolor[HTML]{EDEDED}95.55        & \cellcolor[HTML]{EDEDED}95.01        & \cellcolor[HTML]{EDEDED}81.67        & \cellcolor[HTML]{EDEDED}95.45        & \cellcolor[HTML]{EDEDED}94.69        & \cellcolor[HTML]{EDEDED}80.79  \\  \hline   
\end{tabular}
}
\end{table}